\documentclass[letterpaper]{article} 
\usepackage[preprint]{aaai2027}  
\usepackage[hyphens]{url}  
\usepackage{graphicx} 
\usepackage{natbib}  
\usepackage{caption} 
\DeclareCaptionStyle{ruled}{labelfont=normalfont,labelsep=colon,strut=off} 
\usepackage{subcaption}  

\usepackage{booktabs}
\usepackage{multirow}
\usepackage{xcolor}
\definecolor{rankbest}{rgb}{0.0,0.55,0.0}
\definecolor{ranksecond}{rgb}{0.0,0.0,0.85}
\definecolor{rankworst}{rgb}{0.80,0.0,0.0}
\newcommand{\best}[1]{\textbf{\textcolor{rankbest}{#1}}}
\newcommand{\second}[1]{\textcolor{ranksecond}{#1}}
\newcommand{\worst}[1]{\textcolor{rankworst}{#1}}
\newcommand{\gap}[2]{#1\textsubscript{$\pm$#2}}

\usepackage{tikz}
\usetikzlibrary{positioning,arrows.meta,calc,backgrounds,fit}

\usepackage{amsmath}
\usepackage{amssymb}
\usepackage{amsthm}
\usepackage{mathtools}
\usepackage{cleveref}

\theoremstyle{plain}
\newtheorem{proposition}{Proposition}

\theoremstyle{definition}
\newtheorem{definition}{Definition}

\theoremstyle{remark}

\newcommand{\R}{\mathbb{R}}

\newcommand{\phat}[1]{\hat{p}_{#1}}

\newcommand{\Y}{\mathcal{Y}}
\newcommand{\ctx}{\mathcal{D}}

\title{Do Tabular Foundation Models Agree with Themselves?}
\author{
    Christian Kl\"otergens,
    Vijaya Krishna Yalavarthi,
    Lars Schmidt-Thieme,
    Tom Hanika
}
\affiliations{
    Institute of Computer Science \& VWFS DARC, University of Hildesheim, Hildesheim, Germany\\
    \texttt{kloetergens@ismll.de}
}

\begin{document}

\maketitle

\begin{abstract}
  Tabular Foundation Models (TFMs) are currently the best approach to tabular prediction problems. They are constructed as transformers that approximate the Bayesian posterior predictive distribution based on a pre-training prior.  These univariate predictors can be converted into multivariate ones autoregressively by sampling one target and adding it to the features.

  However, the faithfulness of the resulting joint has not been investigated. Furthermore, TFMs cannot be evaluated against the posterior itself, at least not on real-world datasets, because the ground-truth distribution is unknown. We therefore propose asking a different question: could a model's predictions result from \emph{any} joint distribution? To answer this question, we pose two requirements that any such model must satisfy. The first is marginalization consistency, which demands that marginalized conditionals are equal to directly predicted marginals. The second is factorization consistency, which demands that different factorization orders result in equal joint distributions. Every TFM that we evaluate violates both of these requirements for both classification and regression across all datasets.

\end{abstract}
\section{Introduction}

Tabular Foundation Models (TFMs) rest on a claim about Bayesian inference. \citet{Muller2022.Transformers} showed that a transformer trained to predict held-out targets on datasets drawn from a prior approximates the Bayesian posterior predictive distribution (PPD) under that prior. The prior-data fitted network (PFN) blueprint built on this claim now underwrites TFMs such as TabPFN~\citep{Hollmann2025.Accurate}, TabICL~\citep{Qu2025.TabICLa}, TabDPT~\citep{Ma2025.TabDPT} and most recently TabFM~\citep{Kong.TabFM}, which are the state-of-the-art for tabular prediction tasks. 

While the prediction heads of current TFMs are restricted to univariate distributions, practitioners use them to model multivariate distributions, relying on the claim that they approximate Bayesian inference. A value of one target can simply be appended as a feature to condition the prediction of another. Chaining such conditionals turns a univariate predictor into an autoregressive construction of a multivariate joint~\citep{Vetter2025.Effortless}. 
\begin{figure}[t]
    \centering
    \includegraphics[width=\columnwidth]{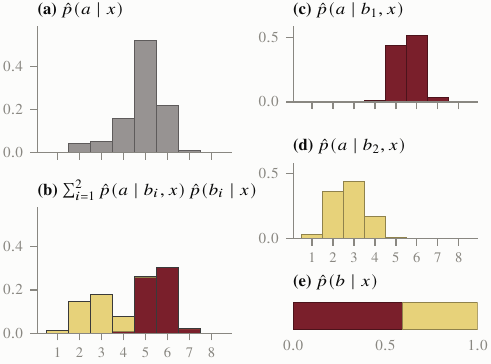}
    \caption{A marginalization-consistency violation on a single \textsc{Wine} test instance from TabPFNv3. The violation is observable when comparing panels \textbf{(a)} and \textbf{(b)}, which show distributions that should be equal, but are not. \textbf{(a)} represents the distribution of the \textsc{Quality-Rating} using all predictors but \textsc{Color}. The distribution depicted in \textbf{(b)} is computed by mixing the \textsc{Quality-Rating} prediction under the assumption that the \textsc{Color} is red \textbf{(c)} and white \textbf{(d)}, weighted by the probability for each color \textbf{(e)}.}\label{fig:wine-teaser}
\end{figure}
What the theory promises in the limit, however, says nothing about how closely a trained model actually attains it. How closely Bayesian inference is actually approximated by the TFMs has not been analyzed. It is also unclear how it would be measured, given that only a single sample is available per row. Hence, the PPD under the pretraining prior is unknown, so there is no reference distribution against which a model's output could be scored. We therefore follow a different approach. Rather than asking how far a model's predictions sit from the Bayesian posterior, we ask whether they could have come from \emph{any} joint distribution at all. A well-defined joint distribution constrains its own marginals and conditionals. They must obey the law of total probability, and the chain rule must factorize it identically in either order. These constraints can be checked from a model's own predictions alone, without any ground-truth labels and without knowing the prior. Figure~\ref{fig:wine-teaser} shows what a failure looks like using a single row from the \textsc{Wine} dataset as an example. We compare two predicted distributions of the \textsc{Quality-Rating} for some bottle of wine. The first distribution is predicted directly using all available features but ignoring the \textsc{Color}. The second distribution is computed by mixing the conditionals under the assumptions that the color is either red or white (which are all possible wine colors in the dataset). The mixture is weighted by the probability for the respective color. The true joint distribution over all columns in the dataset is unknown, yet we know that both queries should result in the same marginal distribution of \textsc{Quality-Rating}. TabPFNv3, however, computes two very different distributions, indicating that the model does not follow a well-defined distribution.

To guide the ongoing research on TFMs towards minimizing or even eliminating such inconsistencies, we formulate two requirements that any model whose predictions are induced by a well-defined joint must satisfy.
First, \emph{marginalization consistency}, which ties the model's conditionals back to its directly predicted marginals.
Second, \emph{factorization consistency}, which demands that the two chain-rule orders build the same joint distribution. 
We formally show that these two consistency requirements are closely connected. Factorization consistency implies marginalization consistency, but not vice versa. Hence, every observed marginalization failure already certifies that changing the order of factorization will lead to different distributions.

We also design experiments to evaluate whether and to what extent different TFM classifiers and (probabilistic) regressors violate each consistency requirement.
Every model we evaluate violates both conditions, on every dataset, for both classification and regression. Our results show that the recently released TabFM is the \emph{most consistent} classifier. However, this model currently does not offer probabilistic regression. For regressors we do not find a model that is the most consistent across datasets and requirements. Furthermore, our experiments show that our proposed measures of violation for each requirement strongly correlate with each other.
Our contributions are:
\begin{itemize}
    \item We formulate two consistency requirements that a TFM must satisfy for its implied joint to be well-defined.
    \item From these requirements we derive metrics that quantify the extent of violations which function without knowledge of the ground truth distribution.
    \item Applying the derived metrics we add to the evaluation of TFMs by comparing them in terms of consistency. Our results show that all current state-of-the-art TFMs violate both requirements on all datasets.
\end{itemize}

\section{Tabular Foundation Models}\label{sec:tfms}

\begin{figure}[t]
    \centering
%
\definecolor{pfnaccent}{RGB}{51,102,204}
\definecolor{pfnfeat}{RGB}{230,159,0}
\begin{tikzpicture}[
    font=\small,
    cell/.style={draw=black!60, line width=0.3pt, fill=white},
    featcell/.style={cell, fill=pfnfeat!30},
    ycell/.style={cell, fill=pfnaccent!60},
    tbox/.style={draw, rounded corners=2pt, align=center, inner sep=4pt, fill=black!4},
    flow/.style={->, >={Stealth[length=1.8mm]}, semithick},
    grad/.style={->, >={Stealth[length=1.6mm]}, densely dashed, black!55},
    lab/.style={align=center, font=\small},
]
\def\cs{0.26}
\newcommand{\pfnctx}[4]{%
    \begin{scope}[shift={(#1,#2)}]
        \foreach \r in {1,...,#3}{%
            \foreach \c in {1,...,#4}{%
                \path[cell] ({(\c-1)*\cs},{-\r*\cs}) rectangle ++(\cs,\cs);
            }
            \path[cell] ({#4*\cs},{-\r*\cs}) rectangle ++(\cs,\cs);
        }
        \draw[black!60, line width=0.6pt] ({#4*\cs},0) -- ({#4*\cs},{-#3*\cs});
    \end{scope}
}
\newcommand{\pfnquery}[3]{%
    \begin{scope}[shift={(#1,#2)}]
        \foreach \c in {1,...,#3}{%
            \path[featcell] ({(\c-1)*\cs},{-\cs}) rectangle ++(\cs,\cs);
        }
        \path[ycell] ({#3*\cs},{-\cs}) rectangle ++(\cs,\cs);
        \draw[black!60, line width=0.6pt] ({#3*\cs},0) -- ({#3*\cs},{-\cs});
        \node at ({(#3+0.5)*\cs},{-0.5*\cs}) {?};
    \end{scope}
}

\begin{scope}
    \node[font=\small\bfseries, anchor=west] at (-0.15,1.35) {(a) Pretraining};
    \pfnctx{0.41}{1.10}{4}{3}
    \pfnquery{0.41}{-0.06}{3}
    \pfnctx{0.23}{0.92}{4}{3}
    \pfnquery{0.23}{-0.18}{3}
    \pfnctx{0.05}{0.74}{4}{3}
    \pfnquery{0.05}{-0.38}{3}
    \node[lab] at (0.80,-1.1) {synthetic datasets\\ $\ctx^{(k)} \sim$ prior};

    \node[tbox] (tfa) at (3.05,0.05) {Transformer};

    \draw[flow] (1.55,0.05) -- ([xshift=-3pt]tfa.west);
    \draw[flow] ([xshift=3pt]tfa.east) -- ([xshift=0.50cm]tfa.east);

    \node[lab, anchor=west] (outa) at ([xshift=0.58cm]tfa.east)
        {$\hat{p}\bigl(y \mid \mathbf{x}, \ctx^{(k)}\bigr)$\\ for held-out targets};

    \draw[grad] (outa.south) .. controls +(0,-0.55) and +(0,-0.55) .. (tfa.south)
        node[midway, below=1pt, lab, black!55] {update $\theta$ to minimize NLL};
\end{scope}

\begin{scope}[shift={(0,-3.15)}]
    \node[font=\small\bfseries, anchor=west] at (-0.15,1.15) {(b) Inference};
    \pfnctx{0.05}{0.85}{4}{3}
    \pfnquery{0.05}{-0.33}{3}

    \node[tbox] (tfb) at (3.05,0.00) {Transformer};

    \draw[flow] (1.55,0.00) -- ([xshift=-3pt]tfb.west);
    \draw[flow] ([xshift=3pt]tfb.east) -- ([xshift=0.50cm]tfb.east);
    \node[lab, black!55, anchor=north] at ([xshift=0.26cm,yshift=-0.16cm]tfb.east)
        {single\\ forward pass};

    \begin{scope}[shift={(5.22,-0.55)}]
        \foreach \x/\h/\c in {0.30/0.28/1, 0.60/0.72/2, 0.90/0.45/3, 1.20/0.15/4}{%
            \fill[pfnaccent!22] ({\x-0.10},0) rectangle ({\x+0.10},\h);
            \draw[pfnaccent, line width=0.8pt] ({\x-0.10},0) rectangle ({\x+0.10},\h);
            \node[font=\small, black!70, below=1pt] at (\x,0) {\c};
        }
        \draw[->, >={Stealth[length=1.4mm]}, black!70] (0,0) -- (1.55,0)
            node[below=0pt, font=\small] {$y$};
        \draw[->, >={Stealth[length=1.4mm]}, black!70] (0,0) -- (0,0.95);
        \node[lab, anchor=south west] at (0.40,0.80) {$\hat{p}(y \mid \mathbf{x}, \ctx)$};
    \end{scope}

    \begin{scope}[shift={(0.75,-1.42)}]
        \path[cell] (0,-0.11) rectangle ++(0.24,0.22);
        \node[anchor=west, inner sep=0pt] (lgone) at (0.31,0) {context $\ctx$};
        \path[featcell] ($(lgone.east)+(0.42,-0.11)$) rectangle ++(0.24,0.22);
        \node[anchor=west, inner sep=0pt] (lgtwo) at ($(lgone.east)+(0.73,0)$) {features $\mathbf{x}$};
        \path[ycell] ($(lgtwo.east)+(0.42,-0.11)$) rectangle ++(0.24,0.22);
        \node[anchor=west, inner sep=0pt] at ($(lgtwo.east)+(0.73,0)$) {target $y$};
    \end{scope}
\end{scope}
\end{tikzpicture}
    \caption{The prior-data fitted network blueprint. (a)~A transformer is pretrained to predict held-out targets on synthetic datasets $\ctx^{(k)}$ drawn from a prior. (b)~With frozen weights, a single forward pass on a real training set $\ctx$ and test point $\mathbf{x}$ returns a univariate predictive density approximating the posterior predictive under that prior.}\label{fig:pfn}
\end{figure}

Tabular foundation models transfer the in-context learning paradigm to supervised prediction on tables. The standard blueprint is the prior-data fitted network (PFN;\@ Figure~\ref{fig:pfn}). A transformer is pretrained on millions of synthetic datasets drawn from a prior over data-generating processes (for instance, randomly sampled structural causal models) by predicting held-out target values from the remainder of each dataset. After pretraining, the model solves a new task without any gradient updates. It consumes a labeled training set $\ctx = {\{(\mathbf{x}_i, y_i)\}}_{i=1}^{n}$ as its context together with a test feature vector $\mathbf{x}$, and a single forward pass returns a predictive distribution $\hat{p}(y \mid \mathbf{x}; \ctx)$ over the target. The pretraining objective shapes this output into an approximation of the Bayesian posterior predictive distribution under the synthetic prior~\citep{Muller2022.Transformers}.

\paragraph{One Notation for Both Prediction Tasks}
TFMs serve two prediction tasks through two predictive heads: a categorical head over a finite set of classes, and a distributional head for probabilistic regression of a continuous value. The distributional head is realized as a bin-wise density~\citep{Hollmann2025.Accurate} or as a set of predicted quantiles~\citep{Qu2026.TabICLv2}, depending on the model. We treat classification and regression in a unified way. A target $y^{j}$ takes values in a target space $\Y^{j}$ that is either finite (classification) or a continuous subset of $\R$ (regression), and $\hat{p}$ always denotes the density of the predictive distribution with respect to the base measure of that space: the counting measure in the finite case, where $\hat{p}$ is a probability mass function and $\int_{\Y^{j}}\!\cdots\,\mathrm{d}y^{j}$ is a finite sum, and the Lebesgue measure in the continuous case. We therefore speak of \emph{densities} and \emph{integrals} throughout, and every statement in this section and in Section~\ref{sec:consistencies} applies verbatim to classification targets, regression targets, and to a pair of targets of mixed type.
\paragraph{Implied Joint Distributions}

While the predictive head is univariate, the in-context interface is agnostic to which columns act as features and which single column acts as the target. This flexibility provides conditional predictions for free. To condition the prediction of one target on the value of another, that value is simply appended as an additional feature column (to the test point $\mathbf{x}$ and, using the observed values, to every row of the context set $\ctx$)~\citep{Vetter2025.Effortless}. Chaining such conditionals turns the univariate predictor into a construction for multivariate joints. For this, $\mathbf{y} = (y^1, \dots, y^d) \in \Y^1 \times \cdots \times \Y^d$ collects $d$ targets of interest. For any permutation $\pi$ of $\{1, \dots, d\}$, the chain rule implies the autoregressive factorization
\begin{equation}\label{eq:ar-joint}
    \phat{\pi}(\mathbf{y} \mid \mathbf{x})
    \;=\;
    \prod_{j=1}^{d} \hat{p}\bigl(y^{\pi(j)} \,\big|\, \mathbf{y}^{\pi(<j)}, \mathbf{x}\bigr),
\end{equation}
where $\mathbf{y}^{\pi(<j)}\coloneqq (y^{\pi(1)}, \dots, y^{\pi(j-1)})$ denotes the already-processed targets. Each factor is produced by a separate in-context inference in which these targets are treated as features. The context set each factor is conditioned on therefore differs from factor to factor. We suppress it in~\eqref{eq:ar-joint} and make it explicit in Section~\ref{sec:consistencies}, where it carries the argument. Sampling from $\phat{\pi}$ proceeds ancestrally: draw $y^{\pi(1)} \sim \hat{p}(y^{\pi(1)} \mid \mathbf{x})$, append the sample to the feature vector, draw $y^{\pi(2)}$ from the resulting conditional, and so on.


\section{Consistency Properties}\label{sec:consistencies}
We formalize two requirements for TFMs that are necessary to guarantee that the predicted posteriors stem from a well-defined multivariate distribution. We begin with the weaker of the two, and then show that one implies the other.

\paragraph{Notation for a pair of targets.} Both requirements already apply for $d=2$, so we state them for two targets and write $A$ and $B$ for the two target columns, $a \in \Y^A$ and $b \in \Y^B$ for their values. Superscripts index target columns throughout: $\ctx^{A}$ is the context set in which column $A$ is the target and $B$ is one of the features, and $\ctx^{A}_{-B}$ is the same context with column $B$ removed altogether, which is what the model sees when it predicts $A$ without knowing $B$. For $d>2$ both conditions apply verbatim to any pair of targets, with the remaining ones held fixed as features.

\subsection{Marginalization Consistency}

The most basic requirement ties the model's conditionals back to its direct univariate predictions. The autoregressive construction samples the intermediate target from the model's own predictive distribution; taking this detour must not change what the model \emph{believes} about the remaining target. Therefore, marginalizing the factorization over the intermediate target must recover the marginal that the model predicts directly. The following definition is in the spirit of the Kolmogorov Extension Theorem for stochastic processes~\citep{Oksendal2003.Stochastic}.

\begin{definition}[Marginalization consistency]\label{def:marginalization-consistency}
A model is \emph{marginalization-consistent} iff for every $\mathbf{x}$ and all $a \in \Y^A$,
\begin{equation}
    \hat{p}(a \mid \mathbf{x}; \ctx^{A}_{-B})
    \;=\;
    \int_{\Y^B} \hat{p}(a \mid b, \mathbf{x}; \ctx^{A})\, \hat{p}(b \mid \mathbf{x}; \ctx^{B}_{-A}) \,\mathrm{d}b.
    \tag{C1}\label{eq:marginalization-consistency}
\end{equation}
\end{definition}

A violation means that the model's marginal estimate of $a$ is
inconsistent with its own conditional predictions with respect to
$b$. The two predictions differ in their context set, i.e., $b$ is
dropped on the left-hand side but present on the right-hand side. For
any well-defined joint distribution the identity holds.
Condition~\eqref{eq:marginalization-consistency} demands that the
model's predictions obey the law of total probability; in Bayesian
terms, it is an instance of the martingale property of coherent
predictive systems \citep{Berti2004.Limit,Fong2023.Martingale}.

\subsection{Factorization Consistency}

The second, stronger requirement concerns the constructed joint
distribution itself rather than its univariate projections. The chain
rule factorizes a joint density in either order, $p(a, b \mid
\mathbf{x}) = p(a \mid b, \mathbf{x})\, p(b \mid \mathbf{x}) = p(b
\mid a, \mathbf{x})\, p(a \mid \mathbf{x})$. If the model's
predictive densities are induced by a common joint, the two
autoregressive constructions must therefore agree, i.e., $\phat{(a,b)}
= \phat{(b,a)}$.

\begin{definition}[Factorization consistency]\label{def:factorization-consistency}
A model is \emph{factorization-consistent} iff for every $\mathbf{x}$ and all $a \in \Y^A$, $b \in \Y^B$,
\begin{equation}
\begin{aligned}
    &\hat{p}(a \mid b, \mathbf{x}; \ctx^{A})\, \hat{p}(b \mid \mathbf{x}; \ctx^{B}_{-A}) \\
    &\qquad =\;
    \hat{p}(b \mid a, \mathbf{x}; \ctx^{B})\, \hat{p}(a \mid \mathbf{x}; \ctx^{A}_{-B})
\end{aligned}
    \tag{C2}\label{eq:factorization-consistency}
\end{equation}
\end{definition}

If this property is violated, the autoregressive approach depends on the order in which the targets are sampled. 
Equation~\eqref{eq:factorization-consistency} is a compatibility
condition, which characterizes when a family of conditionals and
marginals can arise from one joint distribution
\citep{Besag1974.Spatial,Arnold1989.Compatible}.

\subsection{The Properties Are Strictly Ordered}\label{sec:hierarchy}

Although Definitions~\ref{def:marginalization-consistency} and~\ref{def:factorization-consistency} arise from different aspects of coherence (agreement between conditional and marginal predictions on the one hand, agreement between two factorizations on the other), they are not independent. Whenever the predicted conditionals are properly normalized densities, factorization consistency already entails marginalization consistency.

\begin{proposition}[\eqref{eq:factorization-consistency} implies~\eqref{eq:marginalization-consistency}]\label{prop:implication}
If a model is factorization-consistent (Definition~\ref{def:factorization-consistency}), then it is also marginalization-consistent (Definition~\ref{def:marginalization-consistency}).
\end{proposition}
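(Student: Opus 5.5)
The plan is to integrate the factorization identity \eqref{eq:factorization-consistency} over $b$ and let the normalization of the conditional predictive for $B$ collapse the right-hand side to the directly predicted marginal of $A$. Fix an arbitrary $\mathbf{x}$ and $a \in \Y^A$. By Definition~\ref{def:factorization-consistency}, the equality
\[
    \hat{p}(a \mid b, \mathbf{x}; \ctx^{A})\, \hat{p}(b \mid \mathbf{x}; \ctx^{B}_{-A})
    \;=\;
    \hat{p}(b \mid a, \mathbf{x}; \ctx^{B})\, \hat{p}(a \mid \mathbf{x}; \ctx^{A}_{-B})
\]
holds pointwise for every $b \in \Y^B$. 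Both sides are nonnegative measurable functions of $b$, so we integrate them with respect to the base measure on $\Y^B$ (a finite sum in the classification case, a Lebesgue integral in the regression case, exactly as in the unified notation of Section~\ref{sec:tfms}). Because the integrands are nonnegative, Tonelli's theorem makes the integral well-defined, possibly as $+\infty$ a priori, with no integrability assumption needed.

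The left-hand side integrates to exactly the right-hand side of \eqref{eq:marginalization-consistency}. On the right-hand side, the factor $\hat{p}(a \mid \mathbf{x}; \ctx^{A}_{-B})$ does not depend on $b$, so we pull it out of the integral. What remains is $\int_{\Y^B} \hat{p}(b \mid a, \mathbf{x}; \ctx^{B})\,\mathrm{d}b$. Since the model's output for target $B$, given features $(a,\mathbf{x})$ and context $\ctx^{B}$, is a probability density on $\Y^B$, this integral equals $1$. The right-hand side therefore reduces to $\hat{p}(a \mid \mathbf{x}; \ctx^{A}_{-B})$. This is exactly the left-hand side of \eqref{eq:marginalization-consistency}, and since $a$ and $\mathbf{x}$ were arbitrary, the model is marginalization-consistent.

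The only genuine subtlety, and the step I would state carefully, is the normalization $\int_{\Y^B} \hat{p}(b \mid a, \mathbf{x}; \ctx^{B})\,\mathrm{d}b = 1$. This is precisely the standing hypothesis flagged before the proposition, that the predicted conditionals are properly normalized densities. For categorical heads it is automatic because of the softmax. For bin-wise or quantile-based regression heads it must be assumed, namely that the predictive distribution is a genuine probability measure on $\Y^B$ with no mass placed outside it. I would also remark that (C2) must hold for all $b$, or at least for almost every $b$, for the integration to go through. Conversely, the argument uses only one of the two normalizations. This already suggests why the implication is strict, and a counterexample for the converse would be built separately.
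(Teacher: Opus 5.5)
Your proof is correct and follows the paper's argument: apply \eqref{eq:factorization-consistency} pointwise in $b$ inside the integral on the right-hand side of \eqref{eq:marginalization-consistency}, pull out the $b$-independent factor $\hat{p}(a \mid \mathbf{x}; \ctx^{A}_{-B})$, and use that $\hat{p}(\,\cdot \mid a, \mathbf{x}; \ctx^{B})$ integrates to one, which is the same normalization hypothesis the paper flags. Your appeal to Tonelli is unnecessary, since the integral of a nonnegative measurable function is already defined in $[0,\infty]$, but it does no harm.
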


\begin{proof}
Let $\mathbf{x}$ be a feature vector and $a \in \Y^A$. We show that the RHS of~\eqref{eq:marginalization-consistency}, the marginal density that the autoregressive construction assigns to $a$, reduces to $\hat{p}(a \mid \mathbf{x}; \ctx^{A}_{-B})$.
\begin{align*}
    &\int_{\Y^B} \hat{p}(a \mid b, \mathbf{x}; \ctx^{A})\, \hat{p}(b \mid \mathbf{x}; \ctx^{B}_{-A}) \,\mathrm{d}b \\
    &\quad\overset{\text{\eqref{eq:factorization-consistency}}}{=} \int_{\Y^B} \hat{p}(b \mid a, \mathbf{x}; \ctx^{B})\, \hat{p}(a \mid \mathbf{x}; \ctx^{A}_{-B}) \,\mathrm{d}b \\
    &\quad= \hat{p}(a \mid \mathbf{x}; \ctx^{A}_{-B}) \underbrace{\int_{\Y^B} \hat{p}(b \mid a, \mathbf{x}; \ctx^{B}) \,\mathrm{d}b}_{=\,1} \\
    &\quad= \hat{p}(a \mid \mathbf{x}; \ctx^{A}_{-B})
\end{align*}
The first step applies~\eqref{eq:factorization-consistency} to the integrand, the second pulls $\hat{p}(a \mid \mathbf{x}; \ctx^{A}_{-B})$ out of the integral, since it does not depend on $b$. The third step uses that $\hat{p}(\,\cdot \mid a, \mathbf{x}; \ctx^{B})$ is a normalized probability density and therefore integrates to one. Without the normalization,~\eqref{eq:marginalization-consistency} holds only up to some constant factor.
\end{proof}


The contrapositive of~\Cref{prop:implication} yields the pragmatic approach of only testing for~\eqref{eq:marginalization-consistency}. Yet, since our work aims at fostering the understanding of these two properties, we investigate both of them in the following. Moreover, we want to quantify the extent to which either condition is violated by TFMs.




\begin{proposition}[\eqref{eq:marginalization-consistency} does not imply~\eqref{eq:factorization-consistency}]\label{prop:no-converse}
Marginalization consistency does not imply factorization consistency.
\end{proposition}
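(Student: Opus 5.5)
We prove the statement by a counterexample: we construct a model that satisfies \eqref{eq:marginalization-consistency} for both targets but violates \eqref{eq:factorization-consistency}. The model only has to specify four predictive quantities for a fixed $\mathbf{x}$: the two direct marginals $\hat{p}(a \mid \mathbf{x}; \ctx^{A}_{-B})$ and $\hat{p}(b \mid \mathbf{x}; \ctx^{B}_{-A})$, and the two conditionals $\hat{p}(a \mid b, \mathbf{x}; \ctx^{A})$ and $\hat{p}(b \mid a, \mathbf{x}; \ctx^{B})$. Because these are produced by separate forward passes, nothing in the definitions ties them together beyond normalization. We may therefore pick them freely, subject only to \eqref{eq:marginalization-consistency}. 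Following the paper's remark that the conditions apply to any pair, we impose \eqref{eq:marginalization-consistency} in both directions, with $A$ and $B$ swapped.

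We use the simplest setting, two binary targets $\Y^A=\Y^B=\{0,1\}$ under the counting measure. Take both direct marginals uniform, $\hat{p}(a \mid \mathbf{x}; \ctx^{A}_{-B})=\hat{p}(b \mid \mathbf{x}; \ctx^{B}_{-A})=\tfrac12$. Let the $A$-conditional be perfectly correlated, $\hat{p}(a \mid b, \mathbf{x}; \ctx^{A})=\mathbf{1}[a=b]$, and let the $B$-conditional ignore $a$, $\hat{p}(b \mid a, \mathbf{x}; \ctx^{B})=\tfrac12$.

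First we check \eqref{eq:marginalization-consistency}. In the direction stated in the definition, the right-hand side is $\sum_b \mathbf{1}[a=b]\cdot\tfrac12=\tfrac12$, which matches the left-hand side. In the swapped direction it is $\sum_a \tfrac12\cdot\tfrac12=\tfrac12$, which also matches. Next we show \eqref{eq:factorization-consistency} fails at $(a,b)=(0,1)$. The left-hand side of \eqref{eq:factorization-consistency} is $\mathbf{1}[0=1]\cdot\tfrac12=0$, while the right-hand side is $\tfrac12\cdot\tfrac12=\tfrac14$. So the two orders produce different joints: $\phat{(a,b)}$ puts all its mass on the diagonal, whereas $\phat{(b,a)}$ is uniform on all four cells.

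The main obstacle is justifying that such a tuple counts as a ``model'' in the sense of the definitions. Since both definitions quantify only over the model's outputs, it suffices to exhibit one admissible assignment for a single $\mathbf{x}$, for instance a lookup predictor returning these values. We can also make this more concrete by taking two distinct genuine joints, $p_1$ uniform on the diagonal and $p_2$ uniform on $\{0,1\}^2$. Both have uniform marginals. We then let the $A$-conditionals come from $p_1$ and the $B$-conditionals from $p_2$. Each marginalization identity then holds because it is the law of total probability within a single joint, while the chain-rule orders mix the two different joints. This construction also explains why the converse fails in general: \eqref{eq:marginalization-consistency} only constrains integrals of the conditionals, while \eqref{eq:factorization-consistency} constrains them pointwise.
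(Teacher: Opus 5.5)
Your proposal is correct and is essentially the paper's own construction: binary targets, uniform direct marginals, an $a$-conditional that depends on $b$, a $b$-conditional that ignores $a$, and marginalization consistency checked in both targets. The only difference is that you use the extreme conditional $\mathbf{1}[a=b]$ where the paper uses the softened $\tfrac34/\tfrac14$ matrix, which keeps every predicted probability strictly positive, as a softmax head would.
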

\begin{proof}
  We provide a counterexample in Appendix~\ref{app:no-converse}.
\end{proof}



\paragraph{Consistency matters.} The ground-truth data-generating
process that any TFM aims to capture \emph{is} a single
joint distribution, whose marginals and conditionals cohere by
construction. Both~\eqref{eq:marginalization-consistency}
and~\eqref{eq:factorization-consistency} therefore necessarily hold for the true
underlying distribution. A model that violates either has
provably departed from \emph{any} valid joint. Therefore, such a model
cannot be considered as an accurate estimate of the underlying
distribution. 

The incoherence also propagates into any downstream
task. Autoregressive data generation~\citep{Vetter2025.Effortless} draws records
from an arbitrary and order-dependent joint. Therefore, one cannot
trust these joints for decisions that are based on the dependence
between targets. 


\section{Experiments}\label{sec:experiments}
We want to put the just-introduced conditions to a test on current TFMs. The goal is to detect and quantify to which extent each condition is violated. The proposed method shall establish a new aspect for analyzing and comparing TFMs and their prediction capabilities. Furthermore, we examine the relationship between the two consistency requirements.

\subsection{Setup}

\paragraph{Models.} We conduct our experiments on several state-of-the-art TFMs: TabPFNv2/v3~\citep{Hollmann2025.Accurate,Grinsztajn2026.TabPFN3}, whose regression head returns a bin-wise piecewise-constant density, TabICLv2~\citep{Qu2026.TabICLv2}, whose head returns predicted quantiles, and \mbox{TabDPTv1.2}~\citep{Ma2025.TabDPT,Hosseinzadeh2026.TabDPTTurbo}, whose head is a softmax over equal-width bins of the standardized target space. For classification, we additionally include \mbox{TabICLv1}~\citep{Qu2025.TabICLa} and TabFM~\citep{Kong.TabFM}. While the former does not provide a regressor, the latter offers point regression only.

\paragraph{Datasets.} We employ tabular datasets from OpenML~\citep{Feurer2021.OpenMLPython} in which two columns form a natural pair of targets that plausibly depend on each other given the remaining features, with all remaining columns serving as the feature vector $\mathbf{x}$. We evaluate on a broad suite spanning both classification and regression; Appendix~\ref{app:datasets} lists every dataset's target pair, size, feature counts, and OpenML ID, together with the selection criteria and preprocessing applied.

\paragraph{Measuring disagreement.}
Each check compares two predictive distributions over the same target space, and we quantify their discrepancy with the total-variation distance
\begin{equation}
    \mathrm{TV}(p,q)=\tfrac{1}{2}\sum_{c}\lvert p_c-q_c\rvert\in[0,1],\label{eq:tv}
\end{equation}
where the sum runs over the classes (classification) or over the cells of a discretization of the target space (regression). $\mathrm{TV}$ equals $0$ iff the distributions coincide and $1$ iff they are disjoint, and it can be read as follows: a value of $t$ means there is an event to which the two sides of the check assign probabilities differing by $t$. Because $\mathrm{TV}$ is bounded and unit-free, classification and regression results are reported on the same scale. 

The classification results can be considered \emph{exact}. The target spaces are finite, so both the marginalizing integral and the two chain-rule joints are finite sums that we evaluate in closed form from the model's categorical heads. Any nonzero $\mathrm{TV}$ is a violation by definition. 

For regression the random variables of interest are continuous, and therefore we cannot enumerate the target space. The distribution of targets modeled by current TFM implementations can be accessed by their quantile function called $\hat{F}^{-1}$. From it, we derive a bin-wise uniform approximation of the density. Consecutive quantile levels define bins of known probability mass, within which the mass is spread uniformly. Two predictive distributions can then be compared by $\mathrm{TV}$ over these bins as in~\eqref{eq:tv}; see Appendix~\ref{app:discretization} for details.
We run every check under $5$-fold cross-testing. 

All experiments were run on NVIDIA RTX 2080 Ti and A40 GPUs with 11GB and 48GB of memory, respectively.

\subsection{Marginalization Consistency}\label{sec:exp-marginalization}
\begin{table*}
    \centering
    \small
    \setlength{\tabcolsep}{2pt}
    \begin{tabular}{ll cccccc}
        \toprule
        & & TabPFNv2 & TabICLv1 & TabDPT & TabICLv2 & TabPFNv3 & TabFM \\
        \midrule
        \multirow{10}{*}{\rotatebox{90}{Classification}}
        & \textsc{Anneal}*    & \second{\gap{0.0022}{.0003}} (0.28) & \worst{\gap{0.0089}{.0025}} (0.31) & \gap{0.0037}{.0011} \second{(0.24)}          & \gap{0.0029}{.0012} (0.29)          & \gap{0.0060}{.0018} \worst{(0.42)}          & \best{\gap{0.0010}{.0008}} \best{(0.18)} \\
        & \textsc{Credit}*    & \gap{0.0237}{.0023} \best{(0.12)}          & \worst{\gap{0.0282}{.0015}} (0.16) & \best{\gap{0.0218}{.0029}} (0.16)   & \second{\gap{0.0234}{.0023}} \second{(0.13)} & \gap{0.0273}{.0024} (0.15)          & \gap{0.0248}{.0038} \worst{(0.19)}        \\
        & \textsc{Phishing}*  & \gap{0.0338}{.0019} \worst{(0.43)}          & \worst{\gap{0.0339}{.0015}} \worst{(0.43)} & \best{\gap{0.0261}{.0024}} \best{(0.24)}   & \gap{0.0293}{.0021} (0.35)          & \second{\gap{0.0278}{.0026}} \second{(0.32)} & \gap{0.0285}{.0023} (0.33)        \\
        & \textsc{MIC}*       & \worst{\gap{0.0133}{.0018}} (0.16)  & \gap{0.0129}{.0008} (0.14)         & \second{\gap{0.0091}{.0011}} \best{(0.06)} & \gap{0.0096}{.0009} \second{(0.08)}          & \second{\gap{0.0091}{.0020}} (0.16) & \best{\gap{0.0090}{.0013}} \worst{(0.20)} \\
        & \textsc{Customer}*  & \second{\gap{0.0060}{.0002}} \best{(0.04)} & \worst{\gap{0.0077}{.0004}} (0.06) & \gap{0.0062}{.0011} \second{(0.05)}          & \second{\gap{0.0060}{.0009}} \worst{(0.09)} & \best{\gap{0.0057}{.0005}} \best{(0.04)}   & \gap{0.0061}{.0010} (0.07)        \\
        & \textsc{Car}       & \gap{0.0513}{.0046} (0.45)          & \worst{\gap{0.0621}{.0038}} \second{(0.39)} & \gap{0.0527}{.0021} \worst{(0.55)}          & \second{\gap{0.0506}{.0049}} (0.50) & \second{\gap{0.0506}{.0055}} (0.54) & \best{\gap{0.0171}{.0024}} \best{(0.18)} \\
        & \textsc{Marketing}* & \worst{\gap{0.0131}{.0013}} \second{(0.19)}  & \gap{0.0129}{.0009} (0.24)         & \best{\gap{0.0087}{.0006}} \best{(0.09)}   & \gap{0.0121}{.0011} (0.23)          & \second{\gap{0.0090}{.0016}} \worst{(0.34)} & \gap{0.0104}{.0009} (0.22)        \\
        & \textsc{Wine}*      & \worst{\gap{0.0478}{.0019}} (0.52)  & \gap{0.0344}{.0008} (0.44)         & \second{\gap{0.0157}{.0002}} \best{(0.39)} & \gap{0.0191}{.0007} \worst{(0.72)}          & \gap{0.0212}{.0006} (0.66)          & \best{\gap{0.0092}{.0008}} \second{(0.43)} \\
        & \textsc{Nursery}   & \gap{0.0536}{.0053} (0.42)          & \worst{\gap{0.0588}{.0030}} \second{(0.27)} & \gap{0.0546}{.0019} (0.43)          & \gap{0.0332}{.0010} \worst{(0.44)}          & \second{\gap{0.0231}{.0009}} (0.41) & \best{\gap{0.0137}{.0007}} \best{(0.25)} \\
        & \textsc{Diamonds}  & \second{\gap{0.0228}{.0007}} \best{(0.35)} & \worst{\gap{0.0339}{.0004}} \second{(0.37)} & \best{\gap{0.0169}{.0003}} (0.41)   & \gap{0.0250}{.0003} (0.47)          & \gap{0.0282}{.0006} \second{(0.37)}          & \gap{0.0312}{.0007} \worst{(0.76)}        \\
        \midrule
        \multirow{7}{*}{\rotatebox{90}{Regression}}
        & \textsc{Boston}    & \gap{0.0742}{.0082} (0.34)          & ---                        & \best{\gap{0.0597}{.0076}} \best{(0.29)}   & \worst{\gap{0.2850}{.0419}} \worst{(0.79)}  & \second{\gap{0.0623}{.0098}} \second{(0.33)} & ---                               \\
        & \textsc{QSAR}*      & \worst{\gap{0.0754}{.0054}} \worst{(0.39)}  & ---                        & \best{\gap{0.0584}{.0050}} (0.33)   & \gap{0.0710}{.0039} \second{(0.29)}          & \second{\gap{0.0651}{.0069}} \best{(0.28)} & ---                               \\
        & \textsc{Stock}     & \gap{0.0775}{.0063} \second{(0.31)}          & ---                        & \best{\gap{0.0313}{.0051}} \best{(0.25)}   & \second{\gap{0.0745}{.0056}} (0.32) & \worst{\gap{0.0777}{.0075}} \worst{(0.40)}  & ---                               \\
        & \textsc{Concrete}  & \worst{\gap{0.1398}{.0078}} \worst{(0.76)}  & ---                        & \best{\gap{0.0911}{.0078}} \second{(0.60)}   & \gap{0.1237}{.0060} (0.66)          & \second{\gap{0.1030}{.0084}} \best{(0.50)} & ---                               \\
        & \textsc{Insurance}* & \gap{0.0570}{.0016} \second{(0.34)}          & ---                        & \worst{\gap{0.0681}{.0030}} (0.37)  & \second{\gap{0.0559}{.0046}} \worst{(0.50)} & \best{\gap{0.0163}{.0029}} \best{(0.26)}   & ---                               \\
        & \textsc{Fiat}*      & \gap{0.0566}{.0107} \best{(0.34)}          & ---                        & \best{\gap{0.0524}{.0029}} \second{(0.36)}   & \worst{\gap{0.0723}{.0039}} \worst{(0.48)}  & \second{\gap{0.0529}{.0054}} \worst{(0.48)} & ---                               \\
        & \textsc{Kin8nm}    & \second{\gap{0.0815}{.0021}} \second{(0.32)} & ---                        & \best{\gap{0.0748}{.0016}} \best{(0.31)}   & \worst{\gap{0.0864}{.0021}} (0.33)  & \gap{0.0848}{.0026} \worst{(0.37)}          & ---                               \\
        \bottomrule
    \end{tabular}
    \caption{Marginalization-consistency violations, one column per model: total-variation distance $\mathrm{TV}\in[0,1]$ between the direct marginal $\hat{p}(a\mid\mathbf{x};\ctx^{A}_{-B})$ and the marginalized mixture, averaged over the test set, with the standard deviation over the five cross-testing folds as a subscript and the worst single test instance in parentheses. Larger is worse; a consistent model would score $0$. Per row, \best{green (bold)} and \second{blue} mark the smallest and second-smallest mean gap, \worst{red} the largest; the parenthesized worst-instance values are ranked and colored the same way. TabICLv1 ships no regressor and TabFM's regression head emits only point predictions; both are evaluated on classification only. An asterisk (*) marks datasets included in the TabArena benchmark.}\label{tab:marginalization}
\end{table*}

Condition~\eqref{eq:marginalization-consistency} asks whether marginalizing the autoregressive factorization over the intermediate target $b$ recovers the directly predicted marginal of $a$. We form the mixture $\int_{\Y^B}\hat{p}(a\mid b,\mathbf{x};\ctx^{A})\,\hat{p}(b\mid\mathbf{x};\ctx^{B}_{-A})\,\mathrm{d}b$ from the model's own conditional and marginal heads and compare it against $\hat{p}(a\mid\mathbf{x};\ctx^{A}_{-B})$.

\paragraph{Classification.} For categorical targets, the integral is a finite sum, so we evaluate the conditional $\hat{p}(a\mid b,\mathbf{x};\ctx^{A})$ at every class of $b$, weight each by the model's predicted $\hat{p}(b\mid\mathbf{x};\ctx^{B}_{-A})$, and compare the resulting mixture to the direct marginal. Figure~\ref{fig:wine-teaser} already illustrated such a violation on a single \textsc{Wine} instance. 

\emph{Observations.}
Table~\ref{tab:marginalization} shows that the violations are systematic. Every model violates the condition on every dataset. Since the mixture is evaluated in closed form, these gaps cannot be sampling noise. The computational detour through $b$ truly alters the model's estimate of $a$, even though the estimate of $b$ is obtained from the TFM itself and provides no additional information. No single model has the smallest violations on every dataset, so there is no uniformly \emph{most consistent} classifier for marginalization: TabFM has the lowest average TV on five of the ten datasets, TabDPT on four, and TabPFNv3 on the remaining one. TabICLv1 turns out to be the most inconsistent model.

\paragraph{Regression.} We cannot enumerate $\Y^B$, so we discretize it deterministically: we replace $\hat{p}(b\mid\mathbf{x};\ctx^{B}_{-A})$ by $K=1{,}000$ equal-mass atoms $b_k={(\hat{F}^{B})}^{-1}\big(\tfrac{k-1/2}{K}\big)$ taken from the quantile head. We compute a midpoint quadrature of the marginalization integral and average the conditional \emph{CDFs}, $\hat{F}_{\mathrm{mix}}=\tfrac{1}{K}\sum_{k}\hat{F}(\,\cdot\mid b_k,\mathbf{x};\ctx^{A})$. The averaged CDF is compared with the direct marginal's CDF via $\mathrm{TV}$ on a shared $S=20$-cell grid.

The resolutions $K=1{,}000$ and $S=20$ are deliberately set to opposite
ends. A large $K$ makes the quadrature of the marginalization integral
as precise as needed, so the mixture is not distorted by a coarse
approximation of $\hat{p}(b\mid\mathbf{x})$. The comparison grid is coarsened for the opposite reason: because $\mathrm{TV}$ is non-increasing under coarsening~\citep{Csiszar2004.Information}, merging cells can only hide discrepancies that fall within a cell but never create them. An $S=20$-cell grid is coarse enough for the comparison to be insensitive to the quadrature discretization. The resulting value for $\mathrm{TV}$ is therefore a \emph{lower bound} for any evaluation on a finer grid, and in particular for the $\mathrm{TV}$ between the underlying continuous distributions~\citep{Nielsen2018.GUARANTEED}.

\emph{Observations.} In the regression setting there is not a single TFM that scores an average TV close to zero on any dataset. Again, TabDPT appears to be the model that comes closest to marginalization consistency, as it has the smallest TV on all datasets but \textsc{Insurance}. There, however, it has the highest TV among tested models.

\subsection{Factorization Consistency}\label{sec:exp-factorization}
\begin{table*}
    \centering
    \small
    \setlength{\tabcolsep}{2pt}
    \begin{tabular}{ll cccccc}
        \toprule
        & & TabPFNv2 & TabICLv1 & TabDPT & TabICLv2 & TabPFNv3 & TabFM \\
        \midrule
        \multirow{10}{*}{\rotatebox{90}{Classification}}
        & \textsc{Anneal}*    & \gap{0.0148}{.0015} (0.29)          & \worst{\gap{0.0261}{.0015}} (0.36) & \second{\gap{0.0123}{.0020}} \second{(0.27)} & \gap{0.0129}{.0021} (0.29)          & \gap{0.0173}{.0023} \worst{(0.42)}          & \best{\gap{0.0042}{.0014}} \best{(0.19)} \\
        & \textsc{Credit}*    & \best{\gap{0.0470}{.0037}} \best{(0.14)}   & \worst{\gap{0.0632}{.0046}} (0.22) & \gap{0.0510}{.0020} (0.24)          & \gap{0.0517}{.0048} \worst{(0.26)}          & \gap{0.0506}{.0045} \second{(0.16)}          & \second{\gap{0.0484}{.0044}} (0.18) \\
        & \textsc{Phishing}*  & \gap{0.0522}{.0030} \worst{(0.43)}          & \worst{\gap{0.0536}{.0014}} \worst{(0.43)} & \gap{0.0436}{.0024} \best{(0.29)}          & \gap{0.0448}{.0031} (0.36)          & \second{\gap{0.0422}{.0022}} \second{(0.32)} & \best{\gap{0.0409}{.0032}} (0.34) \\
        & \textsc{MIC}*       & \gap{0.0277}{.0026} (0.16)          & \worst{\gap{0.0297}{.0018}} \worst{(0.24)} & \best{\gap{0.0162}{.0015}} \best{(0.13)}   & \gap{0.0254}{.0006} (0.16)          & \gap{0.0209}{.0017} \second{(0.15)}          & \second{\gap{0.0166}{.0014}} (0.20) \\
        & \textsc{Customer}*  & \gap{0.0231}{.0016} \worst{(0.12)}          & \worst{\gap{0.0252}{.0007}} (0.11) & \second{\gap{0.0180}{.0032}} \second{(0.09)} & \gap{0.0184}{.0016} \second{(0.09)}          & \gap{0.0185}{.0025} \best{(0.08)}          & \best{\gap{0.0144}{.0019}} (0.10) \\
        & \textsc{Car}       & \gap{0.0893}{.0085} (0.45)          & \gap{0.0939}{.0060} \second{(0.39)}         & \worst{\gap{0.1658}{.0374}} \worst{(0.55)}  & \gap{0.0644}{.0058} (0.50)          & \second{\gap{0.0616}{.0048}} (0.54) & \best{\gap{0.0216}{.0038}} \best{(0.18)} \\
        & \textsc{Marketing}* & \gap{0.0434}{.0038} \best{(0.20)}          & \worst{\gap{0.0467}{.0025}} \worst{(0.42)} & \gap{0.0312}{.0059} (0.39)          & \gap{0.0344}{.0040} (0.39)          & \second{\gap{0.0258}{.0059}} (0.36) & \best{\gap{0.0219}{.0013}} \second{(0.22)} \\
        & \textsc{Wine}*      & \worst{\gap{0.0487}{.0018}} (0.59)  & \gap{0.0350}{.0007} (0.48)         & \second{\gap{0.0165}{.0001}} \best{(0.41)} & \gap{0.0193}{.0007} \worst{(0.72)}          & \gap{0.0214}{.0006} (0.66)          & \best{\gap{0.0093}{.0008}} \second{(0.43)} \\
        & \textsc{Nursery}   & \gap{0.0745}{.0053} \worst{(0.45)}          & \gap{0.0746}{.0025} \second{(0.28)}         & \worst{\gap{0.1120}{.0043}} (0.43)  & \gap{0.0452}{.0019} (0.44)          & \second{\gap{0.0378}{.0020}} (0.41) & \best{\gap{0.0138}{.0007}} \best{(0.25)} \\
        & \textsc{Diamonds}  & \gap{0.0655}{.0005} \best{(0.38)}          & \worst{\gap{0.0916}{.0006}} \second{(0.46)} & \gap{0.0724}{.0006} (0.72)          & \gap{0.0744}{.0010} (0.61)          & \second{\gap{0.0632}{.0010}} (0.54) & \best{\gap{0.0593}{.0004}} \worst{(0.76)} \\
        \midrule
        \multirow{7}{*}{\rotatebox{90}{Regression}}
        & \textsc{Boston}    & \best{\gap{0.1335}{.0099}} \best{(0.36)}   & ---                        & \gap{0.1426}{.0091} \second{(0.37)}          & \worst{\gap{0.4762}{.0257}} \worst{(0.90)}  & \second{\gap{0.1379}{.0061}} (0.39) & ---                               \\
        & \textsc{QSAR}*      & \best{\gap{0.2068}{.0166}} \best{(0.62)}   & ---                        & \second{\gap{0.2116}{.0110}} \worst{(0.90)} & \gap{0.2181}{.0058} (0.74)          & \worst{\gap{0.2284}{.0055}} \second{(0.73)}  & ---                               \\
        & \textsc{Stock}     & \best{\gap{0.1307}{.0053}} \worst{(0.46)}   & ---                        & \second{\gap{0.1448}{.0081}} \second{(0.40)} & \worst{\gap{0.1626}{.0054}} \best{(0.38)}  & \gap{0.1484}{.0043} (0.44)          & ---                               \\
        & \textsc{Concrete}  & \gap{0.2206}{.0115} (0.77)          & ---                        & \second{\gap{0.1815}{.0085}} \second{(0.74)} & \worst{\gap{0.2578}{.0118}} \worst{(0.80)}  & \best{\gap{0.1813}{.0067}} \best{(0.68)}   & ---                               \\
        & \textsc{Insurance}* & \second{\gap{0.1634}{.0044}} \best{(0.53)} & ---                        & \worst{\gap{0.2576}{.0115}} \second{(0.73)}  & \gap{0.2366}{.0093} \worst{(0.78)}          & \best{\gap{0.1620}{.0100}} (0.74)   & ---                               \\
        & \textsc{Fiat}*      & \second{\gap{0.1617}{.0042}} \best{(0.44)} & ---                        & \worst{\gap{0.1973}{.0040}} \worst{(0.87)}  & \gap{0.1883}{.0069} \second{(0.60)}          & \best{\gap{0.1464}{.0044}} (0.62)   & ---                               \\
        & \textsc{Kin8nm}    & \gap{0.2007}{.0068} \worst{(0.53)}          & ---                        & \second{\gap{0.1845}{.0027}} \worst{(0.53)} & \worst{\gap{0.2011}{.0028}} \second{(0.50)}  & \best{\gap{0.1570}{.0028}} \best{(0.43)}   & ---                               \\
        \bottomrule
    \end{tabular}
    \caption{Factorization-consistency violations, one column per model: total-variation distance between the two chain-rule joints, averaged over the test set, with the standard deviation over the five cross-testing folds as a subscript and the worst single test instance in parentheses. Larger is worse; a consistent model would score $0$. Per row, \best{green (bold)} and \second{blue} mark the smallest and second-smallest mean gap, \worst{red} the largest; the parenthesized worst-instance values are ranked and colored the same way. TabICLv1 ships no regressor and TabFM's regression head emits only point predictions; both are evaluated on classification only. An asterisk (*) marks datasets included in the TabArena benchmark.}\label{tab:factorization}
\end{table*}

Condition~\eqref{eq:factorization-consistency} is the stronger requirement. It asks whether the joint distribution built by the chain rule is independent of the order in which the two targets are sampled. We construct both joints and measure how far apart they are.

\paragraph{Classification.} We build each joint exactly, enumerating over all possible $a,b$ for each of the two orders ($\hat{p}(a\mid b,\mathbf{x};\ctx^{A})$, $\hat{p}(b\mid a,\mathbf{x};\ctx^{B})$). The resulting joint distributions are therefore tables of size $|\Y^{A}| \times |\Y^{B}|$, where each cell represents a probability for the combination of some $a$ and $b$. The two tables resulting from each order are then compared based on TV\@.

\emph{Observations.} 
\Cref{tab:factorization} shows that the two orders disagree everywhere, and by a wider margin than the marginalization check. This translates to the intuition that follows from \Cref{prop:implication}: factorization consistency is \emph{harder} to achieve than marginalization consistency.
TabFM sticks out as the TFM that comes closest to fulfilling factorization consistency, as it has the lowest average TV on 8 out of 10 datasets and comes in second on the remaining two. Similar to the marginalization consistency experiment, TabICLv1 is the classifier with the highest violations on average. 

\paragraph{Regression.} The joint cannot be enumerated, so we discretize each order onto the same $128\times128$ grid of $(a,b)$ cells whose axes span the range between the outermost quantiles. For order $a\to b$, the $b$-axis cell masses and each cell's probability-mass midpoint $b^{*}_{l}$ come from the quantile head of $\hat{p}(b\mid\mathbf{x};\ctx^{B}_{-A})$. Then, the conditional $\hat{p}(a\mid b^{*}_{l},\mathbf{x};\ctx^{A})$ distributes each cell's mass over the $a$-axis through its CDF\@. This is again evaluated from the quantile head. The converse order $b\to a$ is constructed analogously.

\emph{Observations.}
The order-dependence carries over to the continuous setting. The two chain-rule orders thus disagree not only on the marginals but on the dependence structure of the constructed joint, so the \emph{joint distribution} produced by the autoregressive workaround depends on the order in which the targets happen to be sampled. 

In contrast to the marginalization consistency experiment, TabDPT belongs to the \emph{more inconsistent} models in this setting. TabPFNv3 is the regressor that comes closest to factorization consistency among the evaluated TFMs as it has the lowest average TV on the majority of datasets.

\subsection{Discussion}

\begin{figure}[t]
    \centering
    \includegraphics[width=\columnwidth]{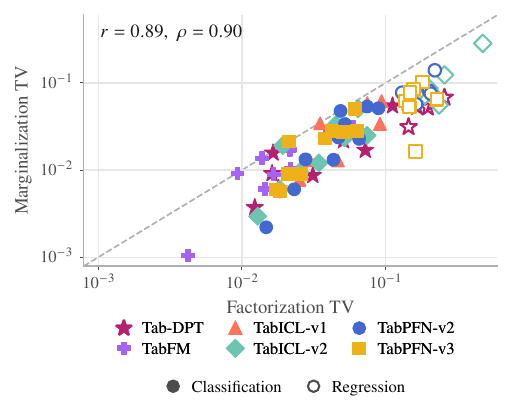}
    \caption{Factorization- versus marginalization-consistency TV, one marker per (model, dataset) pair, filled for classification and open for regression; the dashed line marks equality. The two gaps are strongly associated (Pearson $r=0.89$, Spearman $\rho=0.90$).}\label{fig:factorization-vs-marginalization}
\end{figure}

Every model violates both conditions on every dataset, so the observed inconsistency is a structural property of current TFMs. TabFM, the newest and top-ranked model on TabArena~\citep{Erickson2026.TabArena},  comes closest to being consistent among all tested classifiers. Based on that, one could assume that a model's consistency automatically improves with its predictive performance. However, relating each model's dataset-level accuracy or RMSE to its consistency gap (Appendix~\ref{app:acc-tv}) reveals no strong correlation. Therefore, consistency is a distinct axis that benchmarks such as TabArena do not capture so far.

We find a strong association between the two consistency types (see Figure~\ref{fig:factorization-vs-marginalization}).
Additionally, for the classification task we find that the TV measures for the two consistency requirements are higher for datasets with fewer features. However, we do not observe this for regression (see Appendix~\ref{app:tv-features}).

TabDPT is a revealing case in two aspects. First, it is trained with a self-supervised objective that predicts each column from the others~\citep{Ma2025.TabDPT,Hosseinzadeh2026.TabDPTTurbo}. As the role of target and feature is repeatedly swapped during training, one would intuitively expect it to align the two conditional directions and thus improve factorization consistency. This expectation is not met: TabDPT is one of the more inconsistent models under factorization, while it is relatively consistent under marginalization.
Second, unlike the other models, TabDPT is trained not on synthetic but on real data, a corpus that includes those standard OpenML tables in our suite that are not drawn from the curated TabArena re-uploads. Yet it is not measurably more consistent on datasets seen during training than on the curated ones it has not seen. TabArena datasets are marked with an $*$ in \Cref{tab:marginalization,tab:factorization}. 

\section{Related Work}\label{sec:related-work}
\paragraph{Coherence of predictive systems.}
That a sequence of one-step-ahead predictions defines a valid joint is the defining property of a coherent (martingale) predictive system. This is the foundation of the martingale-posterior view of Bayesian inference~\citep{Berti2004.Limit,Fong2023.Martingale}. \citet{Falck2024.InContext} use exactly this martingale property to test whether in-context learning in large language models (LLMs) is Bayesian. TabMGP builds martingale posteriors on top of TabPFN, implicitly assuming the requisite coherence~\citep{Ng2025.TabMGP}.

Our marginalization-consistency condition is the finite, tabular instance of the martingale property these works rely on. The results of our experiments show that the condition is violated by TFMs. This disproves the coherence that such constructions already assume.

\paragraph{Incoherence of any-order models.}
The tension between univariate training and joint deployment is well documented outside the tabular setting. Order-agnostic autoregressive models train a single network to model every conditional ordering~\citep{Uria2014.Deep,Hoogeboom2021.Autoregressive}, but the orderings do not necessarily agree. \citet{Shih2022.Training} show these models are internally redundant, defining the same distribution in several inconsistent ways.

LLMs likewise return probability judgments that violate the basic properties of probability~\citep{Zhu2024.Incoherent,Luo2025.Probability}. We bring this lens to tabular foundation models, and unlike these qualitative diagnoses, we derive label-free metrics that quantify the extent of inconsistency. 

\paragraph{Kolmogorov consistency in neural processes.}
Neural processes are meta-learned predictors that, like TFMs, map a context set to a predictive distribution. It is known that they generally fail the Kolmogorov consistency conditions that are required of stochastic processes~\citep{Oksendal2003.Stochastic}. \citet{Young2026.Conditioning} precisely defines this for conditional neural processes~\citep{Garnelo2018.Conditional}, establishing a \emph{conditioning consistency gap} between adding a point to the context and conditioning on it.
Our marginalization-consistency condition is the tabular counterpart of this gap. The difference is that TFMs are deployed with the finite, and often small, context sizes at which such gaps are largest, and we measure the violation empirically rather than bounding it asymptotically.

\citet{Yalavarthi2026.Reliable} define marginalization consistency in the context of probabilistic forecasting for irregularly sampled multivariate time series. They show experimentally that a previous model fails to fulfill that condition, and introduce a consistent alternative.

\paragraph{Compatibility of conditionals.}
Deciding whether a family of conditional and marginal distributions can arise from a common joint is a fundamental problem in statistics~\citep{Besag1974.Spatial,Arnold1989.Compatible,Arnold2001.Conditionally}. The same problem underlies dependency networks, where independently fitted conditionals are combined without any guarantee of coherence~\citep{Heckerman2000.Dependency}. Copulas offer the complementary, coherent-by-construction route to a joint~\citep{Nelsen2006.Introduction} by separating the marginals and their dependence structure. Our consistency conditions reflect the compatibility requirements from the literature, and are turned into tests on a TFM's predictions.


\section{Conclusion and Future Work}\label{sec:future-work}

We began from the observation that the Bayesian promise underlying TFMs cannot be measured directly. Without access to the pretraining prior, there is no reference posterior predictive distribution to score a model against. We therefore asked the weaker but checkable question of whether a model's predictions could have been induced by \emph{any} joint distribution at all, and formalized it as two requirements: marginalization and factorization consistency. Both are testable from a model's own outputs alone. Every TFM we evaluate violates both of them, on every dataset and for classification as well as probabilistic regression. The answer to our question is therefore negative: the univariate predictions of current TFMs do not arise from any joint distribution over the columns of a table, any of which may be the target of a different query.

\paragraph{Mixed categorical and continuous target pairs.} The conditions of Section~\ref{sec:consistencies} are stated over a general target space and apply verbatim to a \emph{mixed} pair ($a$ categorical, $b$ continuous), which would exercise the marginalizing mixture across the categorical/continuous boundary. That check is uninformative for the models we studied, as today's tabular foundation models use separate weights for classification and regression, so a mixed pair would compare two effectively different models rather than probe the coherence of one. It becomes a useful experiment once unified architectures emit classification and regression from shared weights.

\paragraph{Toward consistency-aware tabular foundation models.} The conditions we formalize are, at present, purely diagnostic. No current model is trained or constrained to satisfy them. We see two complementary lines of work. The first is \emph{evaluation}: marginalization consistency is cheap to measure, since it compares only univariate heads, and by our contrapositive it already certifies factorization violations. We therefore argue that it belongs in the standard evaluation suite, alongside accuracy and calibration, for any tabular foundation model that is used, or might be used, as a joint density estimator.

The second line is \emph{model design}, and centers on how to build in-context predictors whose univariate predictions provably cohere. Possible directions include training objectives that penalize marginalization or factorization gaps, architectures that share a common latent joint from which the univariate heads are derived by construction, and post-hoc reconciliation that projects a set of incompatible conditionals onto the nearest compatible family. Whether consistency can be achieved without sacrificing expressivity is the central open question.

\paragraph{Native joint prediction.} Future generations of TFMs may be able to predict multivariate densities or copulas directly. These models may need to be consistent in a new way. When queried one target at a time, the model still induces a joint distribution through factorization. This joint distribution may need to align with the distribution predicted directly by the multivariate head. Our two conditions then extend from a pair of targets to arbitrary subsets of them. Regardless of whether columns are part of the features or targets, the induced joint distribution should be consistent.


\bibliography{references}

\cleardoublepage{}

\appendix

\section{Deferred Proof}\label{app:no-converse}

We give the construction behind Proposition~\ref{prop:no-converse}, which separates the two consistency properties by exhibiting predictive densities that satisfy marginalization consistency in both targets yet violate factorization consistency.

\setcounter{proposition}{1}
\begin{proposition}[Marginalization consistency does not imply factorization consistency, restated]
There exist predictive densities that satisfy marginalization consistency in both targets yet violate factorization consistency.
\end{proposition}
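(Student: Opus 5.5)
The plan is to build an explicit counterexample with two binary targets, $\Y^A=\Y^B=\{0,1\}$, and a single fixed feature vector $\mathbf{x}$. All integrals then reduce to two-term sums and every check is a finite computation. Because the counterexample only has to specify the four predictive objects that appear in \eqref{eq:marginalization-consistency} and \eqref{eq:factorization-consistency}, we treat them as free parameters. These are the direct marginals $m_A(a)\coloneqq\hat{p}(a\mid\mathbf{x};\ctx^{A}_{-B})$ and $m_B(b)\coloneqq\hat{p}(b\mid\mathbf{x};\ctx^{B}_{-A})$, and the conditionals $q_A(a\mid b)\coloneqq\hat{p}(a\mid b,\mathbf{x};\ctx^{A})$ and $q_B(b\mid a)\coloneqq\hat{p}(b\mid a,\mathbf{x};\ctx^{B})$. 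We only require each of them to be a normalized probability mass function.

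To make the construction transparent, we take both direct marginals uniform, $m_A=m_B=(\tfrac12,\tfrac12)$. Marginalization consistency in target $A$ then demands $\tfrac12\bigl(q_A(a\mid 0)+q_A(a\mid 1)\bigr)=\tfrac12$ for each $a$. This holds, for instance, for the ``copy'' conditional $q_A(a\mid b)=\mathbf{1}[a=b]$. Symmetrically, marginalization consistency in target $B$ holds for the ``flip'' conditional $q_B(b\mid a)=\mathbf{1}[b\neq a]$, since averaging it over a uniform $a$ again gives $\tfrac12$. We verify both mixtures directly, which shows that \eqref{eq:marginalization-consistency} holds in both directions.

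Next we compute the two chain-rule joints of \eqref{eq:factorization-consistency}. The order $b\to a$ gives $q_A(a\mid b)\,m_B(b)=\tfrac12\mathbf{1}[a=b]$, which puts mass on the diagonal. The order $a\to b$ gives $q_B(b\mid a)\,m_A(a)=\tfrac12\mathbf{1}[a\neq b]$, which puts mass off the diagonal. Evaluating at $(a,b)=(0,0)$ yields $\tfrac12\neq 0$, so \eqref{eq:factorization-consistency} fails. In fact the two joints have disjoint supports, so their total-variation distance is maximal at $1$.

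The main obstacle is conceptual rather than computational. One must make sure the construction is a legitimate instance of the definitions, i.e.\ that the four densities may be chosen independently. This is justified because each is produced by a separate in-context inference on a different context set, as stressed after \eqref{eq:ar-joint}, so no constraint links them beyond normalization. If a less degenerate example is preferred, the indicators can be replaced by $q_A(a\mid b)=1-\varepsilon$ for $a=b$ and $q_B(b\mid a)=1-\varepsilon$ for $b\neq a$, with $\varepsilon\in(0,\tfrac12)$. Marginalization consistency in both targets is unaffected, and the joints still differ at $(0,0)$ by $\tfrac12(1-2\varepsilon)>0$. This shows that the separation of Proposition~\ref{prop:no-converse} does not depend on deterministic conditionals.
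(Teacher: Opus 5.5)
Your construction is correct and takes the same route as the paper's. Both use two binary targets with uniform direct marginals, choose conditionals whose mixture against a uniform weight is again uniform (so \eqref{eq:marginalization-consistency} holds in both targets), and exhibit a mismatch of the two chain-rule joints at $(a,b)=(0,0)$. The one difference is the choice of conditionals: the paper pairs a $\tfrac34/\tfrac14$ dependent conditional for $a$ with an independent (uniform) one for $b$, giving $\tfrac38$ versus $\tfrac14$, whereas your copy/flip pair (or its $\varepsilon$-smoothed version) makes the two joints disjointly supported with $\mathrm{TV}=1$, a more extreme instance of the same argument.
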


\begin{proof}[Construction]
Fix $\mathbf{x}$ and take both targets binary, $a, b \in \{0, 1\}$. Let the marginals be uniform, $\hat{p}(a \mid \mathbf{x}) = \hat{p}(b \mid \mathbf{x}) = (\tfrac12, \tfrac12)$, and set the conditionals (rows indexed by the free target, columns by the conditioning target)
\[
    \hat{p}(a \mid b, \mathbf{x}) = \begin{pmatrix} \tfrac34 & \tfrac14 \\ \tfrac14 & \tfrac34 \end{pmatrix},
    \qquad
    \hat{p}(b \mid a, \mathbf{x}) = \begin{pmatrix} \tfrac12 & \tfrac12 \\ \tfrac12 & \tfrac12 \end{pmatrix},
\]
so that $b$ is independent of $a$ while $a$ depends on $b$. Both conditionals are properly normalized. Marginalizing $\hat{p}(a \mid b, \mathbf{x})$ against the uniform $\hat{p}(b \mid \mathbf{x})$ gives $\tfrac12(\tfrac34, \tfrac14) + \tfrac12(\tfrac14, \tfrac34) = (\tfrac12, \tfrac12) = \hat{p}(a \mid \mathbf{x})$, and marginalizing $\hat{p}(b \mid a, \mathbf{x})$ against $\hat{p}(a \mid \mathbf{x})$ trivially returns $(\tfrac12, \tfrac12) = \hat{p}(b \mid \mathbf{x})$; marginalization consistency holds in both targets. But at $(a, b) = (0, 0)$,
\begin{align*}
    \hat{p}(a{=}0 \mid b{=}0, \mathbf{x})\, \hat{p}(b{=}0 \mid \mathbf{x}) &= \tfrac34 \cdot \tfrac12 = \tfrac38, \\
    \hat{p}(b{=}0 \mid a{=}0, \mathbf{x})\, \hat{p}(a{=}0 \mid \mathbf{x}) &= \tfrac12 \cdot \tfrac12 = \tfrac14,
\end{align*}
so the two sides of~\eqref{eq:factorization-consistency} differ and factorization consistency fails. The two orders build joints with identical uniform marginals but different dependence: correlated one way, independent the other. Binary targets serve only to keep the arithmetic explicit; the same construction carries over to continuous target spaces, with the two-point densities replaced by mixtures of two disjointly supported components.
\end{proof}

\section{Dataset Details}\label{app:datasets}

All datasets are public tables from OpenML~\citep{Feurer2021.OpenMLPython}, fetched through \texttt{fetch\_openml} with either the version or the numeric dataset ID pinned so that the exact table can be reproduced. Table~\ref{tab:datasets} lists, for each dataset, its size, the split of its feature columns into numerical and categorical, the two targets $a$ and $b$ used in the consistency checks, and its OpenML dataset ID\@.

Preprocessing is identical for every dataset and deliberately minimal: rows with missing entries are dropped, the two target columns are removed from the feature set, and every remaining column, numerical or categorical, is passed to the model as a feature $\mathbf{x}$. Apart from the categorical encoding described below, which one of the model APIs requires, no scaling, imputation or feature selection is applied beyond what each model performs internally, so the numbers in Table~\ref{tab:datasets} are the tables the models actually see. Two tables need an exception. \textsc{MIC}'s $112$ columns are missing so pervasively that no row is complete and dropping incomplete rows alone would empty the table; there we first drop every column missing more than $2\%$ of its entries (mostly laboratory values and pre-admission blood-pressure readings, which are missing by design rather than at random), and then drop the remaining incomplete rows, which leaves $53$ columns and $1{,}596$ of the $1{,}699$ rows. \textsc{Marketing} stores the customer enrollment date as free text, which no model in our comparison reads as a date; we parse it to the number of days since the earliest enrollment, turning $663$ opaque string levels into one ordered numerical feature. The train/test protocol is the one described in Section~\ref{sec:experiments}.

\begin{table*}[t]
    \centering
    \setlength{\tabcolsep}{3pt}
     {\footnotesize $^{\dagger}$ \textsc{Wine} merges the red-wine table (ID 40691, 1{,}599 rows) with the white-wine table (ID 40498, 4{,}898 rows); the color of the source table is the second target.\\
    $^{\ddagger}$ \textsc{MIC} is the one table in which no row is complete, so columns missing more than $2\%$ of their entries are dropped before incomplete rows are; $53$ of its $112$ columns and $1{,}596$ of its $1{,}699$ rows survive.\\
    $^{\S}$ \textsc{Marketing} stores the customer enrollment date as free text; it is counted as a numerical feature because it is parsed to days since the earliest enrollment.}
    \begin{tabular}{ll rrr ll r}
        \toprule
        & Dataset & Rows & Feat. & num./cat. & Target $a$ (main) & Target $b$ (second) & OpenML ID \\
        \midrule
        \multirow{10}{*}{\rotatebox{90}{Class.}}
        & \textsc{Anneal}   & 898      & 37 &  6 / 31 & \texttt{classes} (anneal grade), 5 cl. & \texttt{steel}, 8 cl. & 46906 \\
        & \textsc{Credit}   & 1{,}000  & 19 &  7 / 12 & \texttt{good\_or\_bad\_customer}, 2 cl. & \texttt{checking\_status}, 4 cl. & 46918 \\
        & \textsc{Phishing} & 1{,}353  &  8 &  0 / 8 & \texttt{WebsiteType}, 3 cl.       & \texttt{SFH}, 3 cl.    & 46963 \\
        & \textsc{MIC}      & 1{,}596  & 51 &  2 / 49 & \texttt{LET\_IS} (outcome), 8 cl. & \texttt{ASP\_S\_n}, 2 cl. & 46980$^{\ddagger}$ \\
        & \textsc{Customer} & 1{,}723  & 12 &  5 / 7 & \texttt{bad\_client\_target}, 2 cl. & \texttt{education}, 6 cl. & 46938 \\
        & \textsc{Car}      & 1{,}728  &  5 &  0 / 5 & \texttt{class} (acceptability), 4 cl. & \texttt{safety}, 3 cl. & 40975         \\
        & \textsc{Marketing}& 2{,}216  & 24 & 17 / 7 & \texttt{Response} (uptake), 2 cl. & \texttt{Marital\_Status}, 8 cl. & 46940$^{\S}$ \\
        & \textsc{Wine}     & 6{,}497  & 11 & 11 / 0 & \texttt{class} (quality), 8 cl.   & \texttt{color}, 2 cl.  & 40691$^{\dagger}$ \\
        & \textsc{Nursery}  & 12{,}960 &  7 &  0 / 7 & \texttt{class} (recommendation), 5 cl. & \texttt{health}, 3 cl. & 26          \\
        & \textsc{Diamonds} & 53{,}940 &  8 &  7 / 1 & \texttt{cut}, 5 cl.               & \texttt{color}, 7 cl.  & 42225            \\
        \midrule
        \multirow{7}{*}{\rotatebox{90}{Regr.}}
        & \textsc{Boston}   & 506      & 12 & 10 / 2 & \texttt{MEDV}      & \texttt{RM}, $\rho=0.70$        & 531   \\
        & \textsc{QSAR}     & 907      &  5 &  5 / 0 & \texttt{LC50}      & \texttt{MLOGP}, $\rho=0.65$     & 46954 \\
        & \textsc{Stock}    & 950      &  8 &  8 / 0 & \texttt{company10} & \texttt{company1}, $\rho=0.71$  & 223   \\
        & \textsc{Concrete} & 1{,}030  &  7 &  7 / 0 & \texttt{strength}  & \texttt{water}, $\rho=-0.29$    & 44959 \\
        & \textsc{Insurance}& 1{,}338  &  5 &  2 / 3 & \texttt{charges}   & \texttt{bmi}, $\rho=0.20$       & 46931 \\
        & \textsc{Fiat}     & 1{,}538  &  6 &  5 / 1 & \texttt{price}     & \texttt{km}, $\rho=-0.86$       & 46907 \\
        & \textsc{Kin8nm}   & 8{,}192  &  7 &  7 / 0 & \texttt{y}         & \texttt{theta1}, $\rho=-0.14$   & 189   \\
        \bottomrule
    \end{tabular}   
    \caption{The OpenML datasets used in the consistency checks. \emph{Rows} is the number of rows after dropping incomplete records, \emph{Feat.} the number of feature columns (all columns except the two targets), split into numerical and categorical. Target $a$ is the main target whose predictive distribution the checks compare; target $b$ is the second target that is marginalized over, conditioned on, or sampled first. For classification targets we give the number of classes, for regression targets the Pearson correlation $\rho$ between the two targets.}\label{tab:datasets}

\end{table*}

\paragraph{Categorical features.} The models differ in the input format their public interfaces accept, so a categorical column does not reach all of them by the same route. TabPFN and TabICL take a data frame and infer which columns are categorical from the column types, so we hand them the table unchanged and leave the encoding to whatever each model does internally. TabDPT accepts only a numerical matrix (its \texttt{fit} rejects anything else, and it applies mean imputation and standardization to every column it is given), and it offers no way to declare a column categorical, so for TabDPT we integer-encode the categorical columns ourselves, as the TabDPT evaluation does~\citep{Ma2025.TabDPT}. The level order is fixed on the training split and reused on the test split, so both encode identically; levels unseen in training map to $-1$. Columns whose levels are numbers stored as strings (\textsc{Boston}'s \texttt{RAD} and \texttt{CHAS} are nominal in the OpenML table, but their levels are the integers $1$--$8$, $24$ and $0$/$1$) are read as numbers, so their order and spacing survive; only levels that are genuinely non-numeric, such as \textsc{Fiat}'s trim level \texttt{pop}/\texttt{lounge}/\texttt{sport}, receive codes. Two remarks. First, integer encoding imposes an arbitrary order on levels that have none, which costs TabDPT some accuracy on the affected tables (the ones with a nonzero categorical count in Table~\ref{tab:datasets}), but the same encoding is used in every term of both consistency checks, so it cannot by itself open a gap between two factorization orders or between a mixture and its components. Second, the second target $b$ enters the conditional models as an appended feature and is never encoded: in the regression checks it is continuous, and in the classification checks it is a categorical column encoded exactly like any other, with the conditioning levels drawn from the training levels.

\paragraph{Target pairs.} The two targets are chosen so that they plausibly depend on each other given the remaining features, which is what makes the consistency checks non-trivial: if $a$ and $b$ were conditionally independent given $\mathbf{x}$, the conditional and the marginal head would coincide and both conditions would hold for trivial reasons. For classification, \textsc{Wine} pairs the quality grade with the wine color, \textsc{Diamonds} the cut grade with the color grade, \textsc{Car} the acceptability rating with the safety rating, \textsc{Nursery} the admission recommendation with the child's health status, \textsc{Anneal} the annealing grade of a steel coil with the steel grade it was rolled from, \textsc{Phishing} the verdict on a website with the behavior of its server form handler, \textsc{MIC} the outcome of a myocardial infarction with whether aspirin was administered in the ICU, \textsc{Customer} whether a consumer loan defaulted with the borrower's education level, \textsc{Credit} the good/bad risk verdict with the balance band of the applicant's checking account, and \textsc{Marketing} whether a customer accepted the last campaign with that customer's marital status. Where several candidate columns existed we took the one most strongly associated with the main target, measured by Cram\'er's $V$ on the full table, subject to its class distribution not being so skewed that $\hat{p}(b \mid \mathbf{x})$ collapses onto a single class, which would make the mixture in~\eqref{eq:marginalization-consistency} trivially equal to one of its components. For regression the pairs are chosen by correlation: \textsc{Concrete} pairs compressive strength with water content ($\rho=-0.29$), \textsc{Boston} the median home value with the average number of rooms ($\rho=0.70$), \textsc{Stock} two correlated company prices ($\rho=0.71$), \textsc{Kin8nm} the end-effector distance with the first joint angle ($\rho=-0.14$), \textsc{QSAR} the acute fish toxicity of a molecule with its octanol-water partition coefficient ($\rho=0.65$), \textsc{Fiat} the asking price of a used car with its odometer reading ($\rho=-0.86$), and \textsc{Insurance} the annual billed medical expenses with the patient's body-mass index ($\rho=0.20$). The continuous case carries a second requirement that has no analog in the categorical one: $b$ must not be a deterministic function of the remaining features, or the atoms discretizing $\hat{p}(b \mid \mathbf{x})$ all collapse onto one value and the mixture reduces to a single conditional. We therefore also check that $b$ is not perfectly predictable from $\mathbf{x}$, which is what excludes tables from designed experiments, where the design variables determine one another exactly.

\paragraph{Curated re-uploads.} \textsc{Anneal}, \textsc{Phishing}, \textsc{MIC}, \textsc{Customer}, \textsc{Credit} and \textsc{Marketing}, together with the regression tables \textsc{QSAR}, \textsc{Fiat} and \textsc{Insurance}, are the versions of these tables curated for the TabArena tabular-ML benchmark, which is why they are addressed by numeric OpenML ID rather than by name and version: the curation fixes column types and level names that the older uploads leave ambiguous, and the IDs pin the curated snapshot. They also broaden the classification suite along the two axes on which the original four are narrow: \textsc{Anneal}, \textsc{MIC} and \textsc{Marketing} are wide ($37$, $51$ and $24$ features) and mix numerical with categorical columns, and \textsc{MIC} carries an eight-class target whose classes are heavily imbalanced ($86\%$ of its rows fall in the majority class), so the consistency gaps are measured over sharper as well as flatter predictive distributions. \textsc{Marketing} is also the one table whose second target has levels too rare to be learned: three of the eight values of \texttt{Marital\_Status} occur in fewer than four rows. We keep them rather than merge them, since the model simply assigns them negligible mass in $\hat{p}(b \mid \mathbf{x})$ and they contribute correspondingly little to the mixture.

\paragraph{\textsc{Wine}.} This is the only dataset that is not a single OpenML table: the red-wine and white-wine quality tables are concatenated and an additional binary column \texttt{color} records which table a row came from, which supplies the second target. The two tables share their eleven physicochemical features but label their quality column differently: the red table stores the raw quality scores $3$--$8$, whereas the white table stores them index-encoded as $1$--$7$, so the merged target has eight levels rather than the seven distinct quality scores of the underlying data.

\section{Discretization of the Regression Checks}\label{app:discretization}

Section~\ref{sec:experiments} states that the regression checks compare distributions on a shared $20$-cell grid (marginalization) and a shared $128\times128$ grid (factorization). This appendix gives the exact construction of those cells and, since the models expose their predictive distributions in three different parameterizations, how each head is reduced to the single interface the checks consume. Every step below is a deterministic function of the fitted models: there is no sampling anywhere in the pipeline, so repeating a run reproduces the reported numbers exactly.

\paragraph{Common interface: the quantile curve.} The checks never touch a model's internals. Each head is reduced to a \emph{quantile curve}: its inverse CDF $\hat{F}^{-1}$ evaluated on one fixed ladder of $L=2{,}000$ equally spaced levels running from $\alpha_1 = 2.5\times10^{-4}$ to $\alpha_L = 0.99975$. Consecutive levels therefore bracket $5\times10^{-4}$ of predicted mass, and at most $2.5\times10^{-4}$ is left beyond the ladder in each tail. A head may return a slightly non-monotone level sequence (quantile crossing); we take a running maximum along the ladder, which restores a valid inverse CDF and acts as the identity whenever the head is already monotone. Everything downstream is a function of the pairs $(\alpha_\ell, q_\ell)$ alone, so all models enter through the same interface and no model-specific tolerance or tuning is involved.

\paragraph{Per-model access to $\hat{F}^{-1}$.} The heads parameterize the predictive distribution differently, and each is inverted in the way its own parameterization prescribes.

\begin{itemize}
    \item \textbf{TabPFNv2 and v3} emit a full-support bar distribution: a softmax over $5{,}000$ bars, piecewise-constant within a bar, with half-normal tails attached to the outermost two. Quantiles come from the model's own inversion of that CDF. Because the density is available in closed form, it is also what the reported NLL uses directly, with no finite-difference approximation.
    \item \textbf{TabICLv2} emits a quantile distribution: $999$ predicted knots at levels $k/1{,}000$ for $k=1,\dots,999$, joined by a monotone spline in the interior and extended by generalized-Pareto tails beyond the outermost knots. Its quantile function is the head's native output, so evaluating the ladder is a direct call; its density is recovered from the reciprocal slope of the quantile curve.
    \item \textbf{TabDPT} emits a softmax over $2{,}048$ equal-width bins spanning a fixed interval of the \emph{standardized} target space. Its public predictor returns only the bin-center expectation, which is a point prediction and cannot support a distributional check; we therefore read the bin weights off the regression head itself and invert the resulting piecewise-linear CDF, mapping the bin edges back to raw target units with the same standardization the head applies to its training targets. Its density, like TabPFN's, is exact.
\end{itemize}

A head's native resolution never binds the comparison: the coarsest of the three, TabICLv2's $999$ knots, is already some fifty times finer than the $20$ cells on which $\mathrm{TV}$ is finally evaluated.

\paragraph{The $20$ cells (marginalization).} The grid is constructed per test instance \emph{and per model}, from that model's own predictions on that instance, rather than from any fixed global range. Fix an instance $\mathbf{x}_i$, write $q^{\mathrm{dir}}$ for the quantile curve of the direct marginal $\hat{p}(a\mid\mathbf{x}_i;\ctx^{A}_{-B})$, and write $q^{(k)}$ for the curve of the conditional $\hat{p}(a\mid b_k,\mathbf{x}_i;\ctx^{A})$ at the $k$-th of the $K=1{,}000$ equal-mass atoms. The grid spans the union of all $K+1$ supports,
\[
    \begin{aligned}
        \mathrm{lo}_i &= \min\big\{q^{\mathrm{dir}}_1,\ \textstyle\min_k q^{(k)}_1\big\}, \\
        \mathrm{hi}_i &= \max\big\{q^{\mathrm{dir}}_L,\ \textstyle\max_k q^{(k)}_L\big\},
    \end{aligned}
\]
that is, the smallest $\alpha_1$-quantile and the largest $\alpha_L$-quantile among every distribution entering the comparison. It is cut into $G=20$ cells of equal width by the edges $e_g = \mathrm{lo}_i + (g-1)(\mathrm{hi}_i - \mathrm{lo}_i)/G$ for $g = 1,\dots,G+1$, with a $10^{-9}$ widening guarding the degenerate case $\mathrm{hi}_i \le \mathrm{lo}_i$ of a head that predicts a single point.

Each CDF is evaluated at these edges by linear interpolation of its own $(q_\ell, \alpha_\ell)$ pairs, clamped to $0$ below $q_1$ and to $1$ above $q_L$. Writing $F(e_g)$ for those edge values, the cell masses are
\[
    \begin{aligned}
        m_1 &= F(e_2), \\
        m_g &= F(e_{g+1}) - F(e_g), \qquad 1 < g < G, \\
        m_G &= 1 - F(e_G).
    \end{aligned}
\]
The boundary cells thus absorb the mass lying outside the grid, including the $\le2.5\times10^{-4}$ per side left beyond the ladder, so both discretized distributions sum to exactly $1$ and no mass is discarded. The reported gap is $\mathrm{TV} = \tfrac12\sum_{g=1}^{G}\lvert m^{\mathrm{mix}}_g - m^{\mathrm{dir}}_g\rvert$.

Two properties of this construction matter for reading the numbers. The grid is \emph{shared}: both sides are discretized on the same edges with the same folding rule, so the boundary treatment is common to the two sides and cannot by itself open a gap. And the grid is \emph{model-specific}: because its endpoints are read off the model's own predicted support, a cell has a different width for different models, datasets and instances. This does not impair comparability, because $\mathrm{TV}$ always compares two distributions produced by the same model on the same grid and is unit-free.

\paragraph{The $128\times128$ cells (factorization).} Here the two axes are discretized separately, and each axis is spanned by its own marginal head alone: the $129$ $a$-edges are equally spaced across $[q^{a}_1, q^{a}_L]$, the $\alpha_1$- and $\alpha_L$-quantiles of $\hat{p}(a\mid\mathbf{x}_i;\ctx^{A}_{-B})$, and the $b$-edges are built the same way from $\hat{p}(b\mid\mathbf{x}_i;\ctx^{B}_{-A})$. The conditionals do not widen the grid, since each is evaluated only to fill one line of the joint table.

Each axis then yields both cell masses, folded at the boundaries exactly as above, and one representative value per cell. The representative is placed by midpoint quadrature \emph{in probability space} rather than at the geometric cell center: if a cell's edges carry cumulative levels $L_g$ and $L_{g+1}$, its representative is $\hat{F}^{-1}\big(\tfrac12(L_g + L_{g+1})\big)$, read off the same quantile curve. A cell straddling a peak of the density is thus represented by a point near that peak instead of by its geometric middle.

Order~$a \to b$ builds the joint one line at a time: for each $b$-cell, with mass $w_l$ and representative $b^{*}_l$, the conditional $\hat{p}(a\mid b^{*}_l, \mathbf{x}_i;\ctx^{A})$ is evaluated, converted to $a$-axis cell masses on the $a$-edges by the same folding rule, and scaled by $w_l$; the assembled table sums to $1$. Order~$b \to a$ is built symmetrically over the $a$-axis and transposed onto the same grid, so that the two tables are indexed by identical $(a, b)$ cells. The reported gap is the half-sum of absolute differences over all $128^2$ of them.

\section{Predictive Quality versus Consistency}\label{app:acc-tv}
\Cref{fig:acc-tv-factorization,fig:acc-tv-marginalization} plot each model's classification accuracy against its factorization and marginalization TV gap, one panel per dataset; \Cref{fig:rmse-tv-factorization,fig:rmse-tv-marginalization} do the same for regression, plotting RMSE in place of accuracy. Across all four figures, more accurate (or lower-RMSE) models are not systematically more consistent within a dataset: the two quantities do not trade off, but neither does either one predict the other from a single panel.

The dataset-level view above averages over instances, which leaves open how the inconsistency is distributed over rows. \Cref{fig:nll-tv} resolves this at the level of single test rows. For each instance we pair the negative log-likelihood that the direct predictive $\hat{p}(a \mid \mathbf{x}; \ctx^{A}_{-B})$ assigns to the true label with the TV distance between that predictive and the marginalized mixture on the right-hand side of~\eqref{eq:marginalization-consistency}.

Instances whose NLL is near zero are those on which the direct head is already close to a point mass on one class. Mixing over $b$ cannot move a saturated predictive far, so the TV on such rows is bounded near zero for arithmetic reasons alone, independently of whether the model is coherent. Keeping them would draw a diagonal band into the bottom-left corner of every panel and manufacture the appearance of TV growing steadily with the loss, an artifact of saturation rather than evidence that hard instances are the inconsistent ones. We therefore drop every instance with an NLL below $0.05$, i.e., every instance on which the model already assigns roughly $95\%$ or more to the correct class, and plot only the remainder. 

\Cref{fig:nll-tv} shows that there is no correlation between the likelihood assigned to the correct label and marginalization consistency at the per-row level.

\begin{figure}[htp]
    \centering
    \includegraphics[width=\columnwidth]{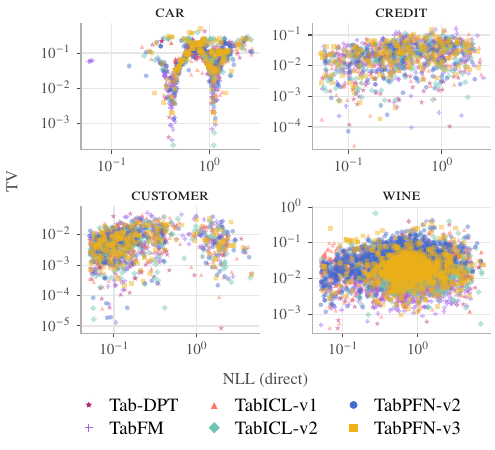}
    \caption{Per-instance predictive loss versus marginalization-consistency gap. The horizontal axis is the NLL that the direct predictive assigns to the true label, the vertical axis the TV distance between that predictive and the mixture obtained by marginalizing over the intermediate target $b$; both axes are logarithmic. Instances with an NLL below $0.05$ (about $95\%$ probability on the correct class) are omitted.}\label{fig:nll-tv}
\end{figure}

\begin{figure*}[t]
    \centering
    \includegraphics[width=\textwidth]{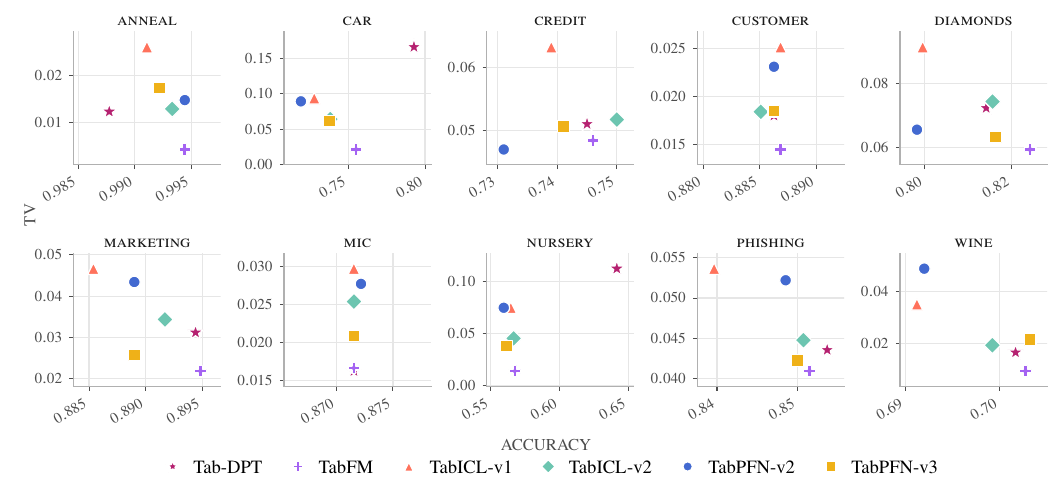}
    \caption{Predictive accuracy versus factorization-consistency TV (\Cref{tab:factorization}), one panel per classification dataset and one marker per model.}\label{fig:acc-tv-factorization}
\end{figure*}

\begin{figure*}[t]
    \centering
    \includegraphics[width=\textwidth]{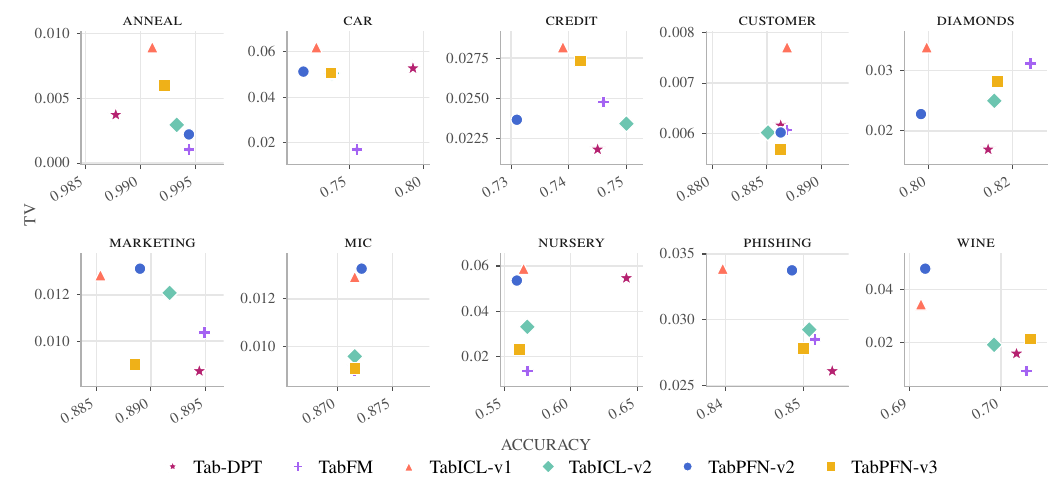}
    \caption{Predictive accuracy versus marginalization-consistency TV (\Cref{tab:marginalization}), one panel per classification dataset and one marker per model.}\label{fig:acc-tv-marginalization}
\end{figure*}

\begin{figure*}[t]
    \centering
    \includegraphics[width=0.8\textwidth]{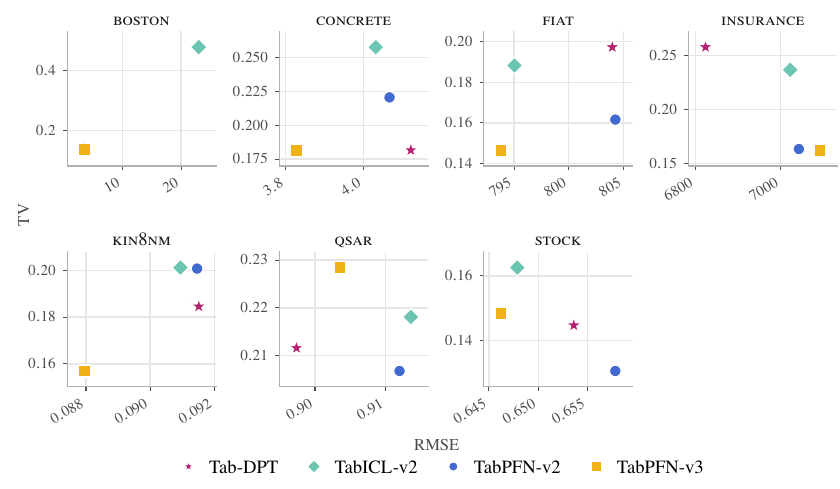}
    \caption{Predictive RMSE versus factorization-consistency TV (\Cref{tab:factorization}), one panel per regression dataset and one marker per model.}\label{fig:rmse-tv-factorization}
\end{figure*}

\begin{figure*}[t]
    \centering
    \includegraphics[width=0.8\textwidth]{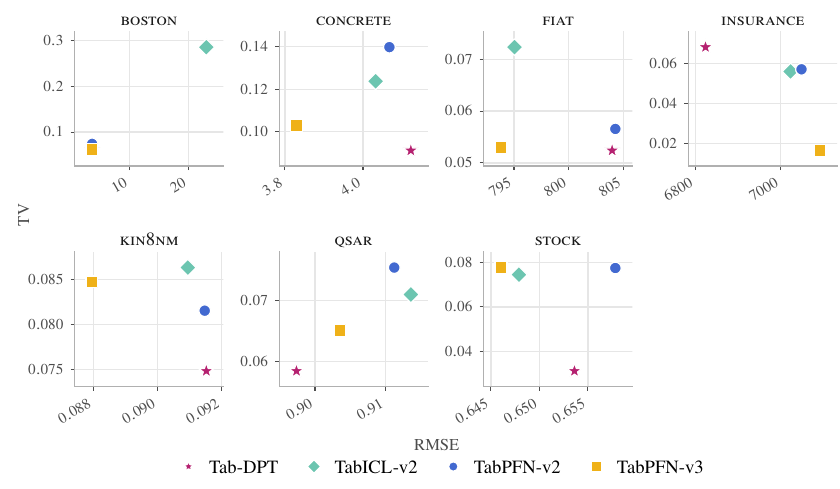}
    \caption{Predictive RMSE versus marginalization-consistency TV (\Cref{tab:marginalization}), one panel per regression dataset and one marker per model.}\label{fig:rmse-tv-marginalization}
\end{figure*}


\clearpage
\section{Consistency versus Number of Features}\label{app:tv-features}
\Cref{fig:tv-features-factorization-classification,fig:tv-features-marginalization-classification,fig:tv-features-factorization-regression,fig:tv-features-marginalization-regression} plot each run's consistency gap against the number of features $D$ of its dataset, and show no evidence that wider tables are harder to stay consistent on: the pooled Spearman correlation between $D$ and TV is negative in three of the four settings (classification $\rho=-0.59$ for factorization and $\rho=-0.75$ for marginalization, regression $\rho=-0.50$ for factorization) and positive only for marginalization on regression ($\rho=+0.40$).

\begin{figure}[htp]
    \centering
    \includegraphics[width=0.9\columnwidth]{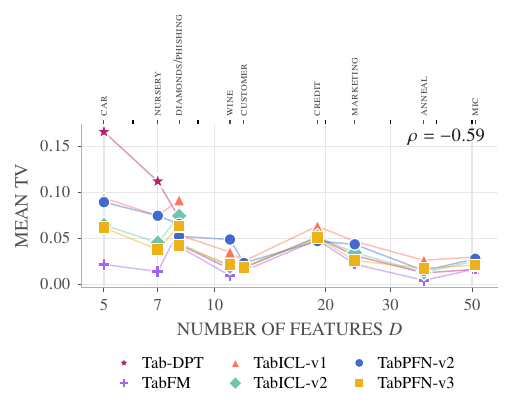}
    \caption{Factorization-consistency TV versus the number of features $D$ on the classification datasets, one marker per (dataset, model).}\label{fig:tv-features-factorization-classification}
\end{figure}

\begin{figure}[htp]
    \centering
    \includegraphics[width=0.9\columnwidth]{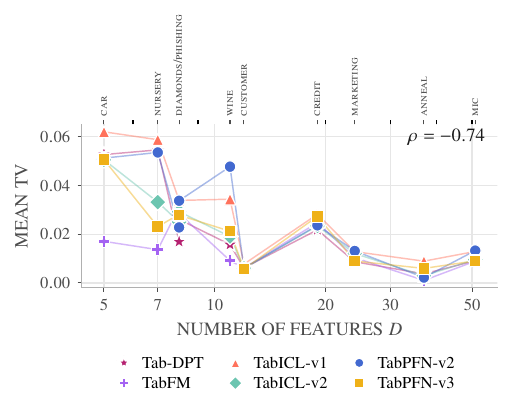}
    \caption{Marginalization-consistency TV versus the number of features $D$ on the classification datasets, one marker per (dataset, model) run.}\label{fig:tv-features-marginalization-classification}
\end{figure}

\begin{figure}[htp]
    \centering
    \includegraphics[width=0.9\columnwidth]{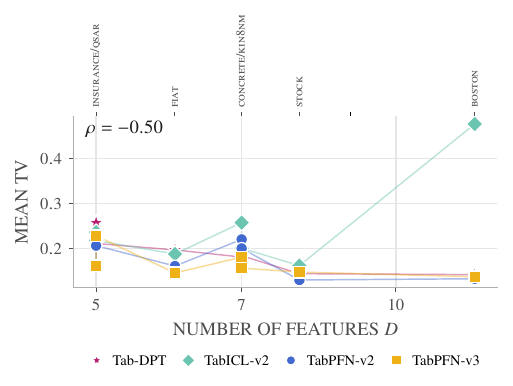}
    \caption{Factorization-consistency TV versus the number of features $D$ on the regression datasets, one marker per (dataset, model) run.}\label{fig:tv-features-factorization-regression}
\end{figure}

\begin{figure}[htp]
    \centering
    \includegraphics[width=0.9\columnwidth]{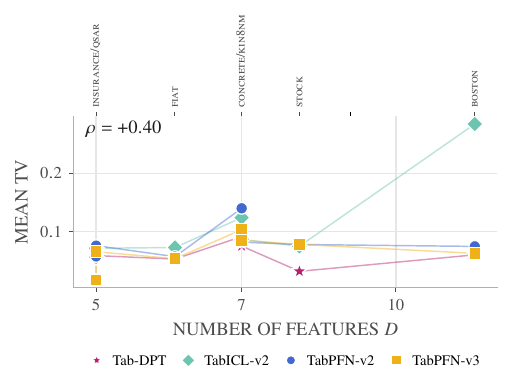}
    \caption{Marginalization-consistency TV versus the number of features $D$ on the regression datasets, one marker per (dataset, model) run.}\label{fig:tv-features-marginalization-regression}
\end{figure}


\end{document}